\documentclass{article} 

\usepackage{iclr2026_conference,times}
\iclrfinalcopy

\usepackage{amsmath,amsfonts,bm}









\def\eqref#1{equation~\ref{#1}}









\def\1{\bm{1}}










\DeclareMathAlphabet{\mathsfit}{\encodingdefault}{\sfdefault}{m}{sl}
\SetMathAlphabet{\mathsfit}{bold}{\encodingdefault}{\sfdefault}{bx}{n}











\newcommand{\E}{\mathbb{E}}



\usepackage{xcolor}
\definecolor{darkgrayish}{gray}{0.25} 
\definecolor{darkteal}{rgb}{0.0,0.25,0.25}

\usepackage[colorlinks,linkcolor=purple,citecolor=darkteal,urlcolor=teal]{hyperref}
\usepackage[utf8]{inputenc} 
\usepackage[T1]{fontenc}    
\usepackage{hyperref}       
\usepackage{url}            
\usepackage{booktabs}       
\usepackage{amsfonts}       
\usepackage{nicefrac}       
\usepackage{microtype}      
\usepackage[dvipsnames]{xcolor}
\usepackage{multirow,bm}
\usepackage{graphicx}
\usepackage{pifont}
\usepackage{amsmath}
\usepackage{amsthm}
\usepackage{algpseudocode}
\usepackage{algorithm}
\usepackage{enumitem}
\usepackage{physics,amssymb}
\usepackage{subcaption} 
\usepackage{colortbl} 

\newcommand{\ie}{\textit{i.e., }}
\newcommand{\eg}{\textit{e.g., }}

\newcommand{\etc}{\textit{etc.}}

\newcommand{\cf}{\textit{cf. }}
\newcommand{\aka}{\textit{aka. }}
\newcommand{\Prob}{\mathbb P}
\newcommand{\bR}{\mathbb R}
\newcommand{\replace}[2]{#2}
\newcommand{\xhdr}[1]{{\noindent\bfseries #1}.}
\newcommand{\xhdrnodot}[1]{{\noindent\bfseries #1}}
\newcommand{\revision}[1]{\textcolor{black}{#1}}

\newcommand{\rebuttal}[1]{\textcolor{black}{#1}}

\usepackage[capitalize,noabbrev]{cleveref}

\newcommand{\std}[1]{$\pm$\small{#1}}
\newcommand{\avgrankingacc}{77.96}
\newcommand{\meanimprov}{11.92}
\newcommand{\optimas}{\textsc{Optimas}}
\newcommand{\hotpot}{\textsc{HotpotQA}}
\newcommand{\pubmed}{\textsc{PubMedQA}}
\newcommand{\amazon}{\textsc{Amazon}}
\newcommand{\code}{\textsc{BigCodeBench}}
\newcommand{\stark}{\textsc{STaRK-Prime} }

\newcommand{\config}{configuration}

\newtheorem{thm}{Theorem}[section]

\newtheorem{lem}{Lemma}[section]

\newtheorem{asm}{Assumption}[section]

\newcommand{\iclr}[1]{\textcolor{black}{#1}}

\usepackage{tikz}

\newcommand{\obj}{component}
\newcommand{\Obj}{Component}
\newcommand{\RF}{Local Reward Function}
\newcommand{\abbr}{LRF}
\newcommand{\loc}{local reward}
\newcommand{\glob}{global reward}

\title{{\huge{\optimas}}: Optimizing Compound AI Systems with Globally Aligned Local Rewards}

\author{%
  Shirley Wu$^{*1}$,
  Parth Sarthi$^{*1}$, 
   Shiyu Zhao$^{*1}$, 
   Aaron Lee$^{1}$, 
  Herumb Shandilya$^{1}$, \\\textbf{Adrian Mladenic Grobelnik}$^{3}$, \textbf{Nurendra Choudhary}$^{2}$, 
  \textbf{ Eddie Huang}$^{2}$, \textbf{Karthik Subbian}$^{2}$, \\\textbf{Linjun Zhang}$^{4}$,
  \textbf{Diyi Yang}$^{1}$,
  \textbf{James Zou}$^{**1}$, \textbf{Jure Leskovec}$^{**1}$\\
    \thanks{Equal contribution; $^{**}$Equal supervision. Correspondence: {\small\texttt{\small<\{shirwu,psarthi\}@cs.stanford.edu>}}.}$^{1}$Stanford University\ \ \ $^{2}$Amazon\ \ \ $^{3}$Jožef Stefan Institute\ \ \  $^{4}$Rutgers University
     \\
  [0.1cm]{\quad\quad\quad\quad\quad\quad\quad\quad\quad\quad\textcolor{cyan}{\url{https://optimas.stanford.edu/}}}
}

%

\begin{document}

\maketitle

\vspace{-5pt}
\begin{abstract}
    
    Compound AI systems integrating multiple \obj{}s, such as Large Language Models, specialized tools, and traditional machine learning models, are increasingly deployed to solve complex real-world tasks. However, optimizing compound systems remains challenging due to their non-differentiable structures and diverse \config{} types across \obj{}s, including prompts, hyperparameters, and model parameters.
    To address this challenge, we propose \optimas{}, a unified framework for effective optimization of compound systems. The core idea of \optimas{} is to maintain one \textit{\RF{}} (\abbr{}) per \obj{}, each satisfying a \textit{local–global alignment} property, \ie each \obj{}'s \loc{} correlates with the global system performance. In each iteration, \optimas{} efficiently adapts the \abbr{}s to maintain this property while simultaneously maximizing each \obj{}'s \loc{}. This approach enables independent updates of heterogeneous configurations using the designated optimization method, while ensuring that local improvements consistently lead to performance gains. 
    We present extensive evaluations across five real-world compound systems to demonstrate that \optimas{} outperforms strong baselines 
    by an average improvement of \meanimprov\%, offering a general and effective approach for improving compound systems.
\end{abstract}

\begin{figure}[h]
    \centering
    \includegraphics[width=1.\textwidth]{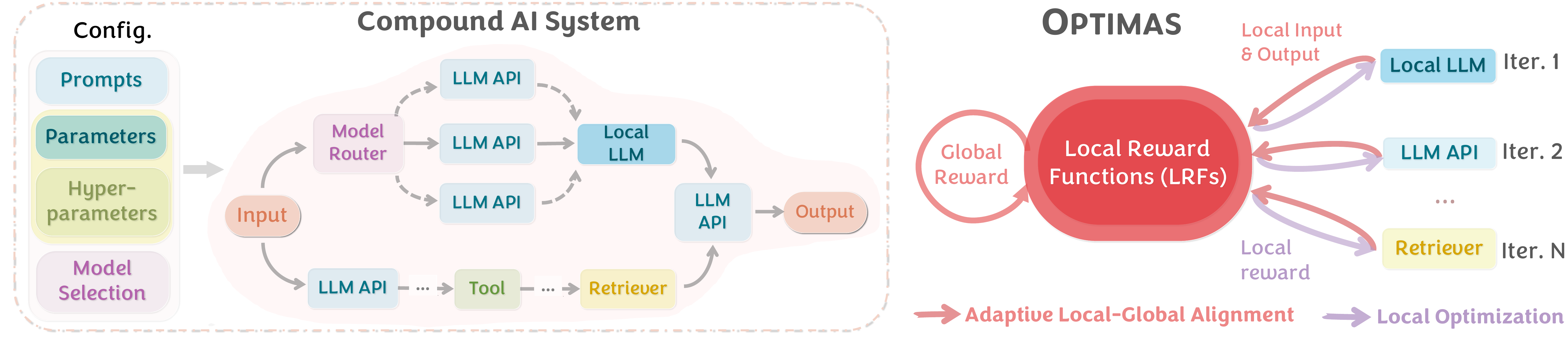}
    \vspace{-15pt}
    \caption{Overview. Given a compound AI system's heterogeneous configurations (\eg prompts, parameters) across multiple \obj{}s, \optimas{} maintains globally aligned \textit{\RF{}}s (\abbr{}s) as the system evolves, where each supervises a \obj{} and assigns higher \loc{}s to outputs with higher system performance (\aka \glob{}s). It iteratively adapts \abbr{}s and optimizes each \obj{} to maximize its \loc{} for effective system optimization.
    }
    \label{fig:overview}
    \vspace{-5pt}
\end{figure}

\begin{figure}[t]
    \centering
    \includegraphics[width=0.95\textwidth]{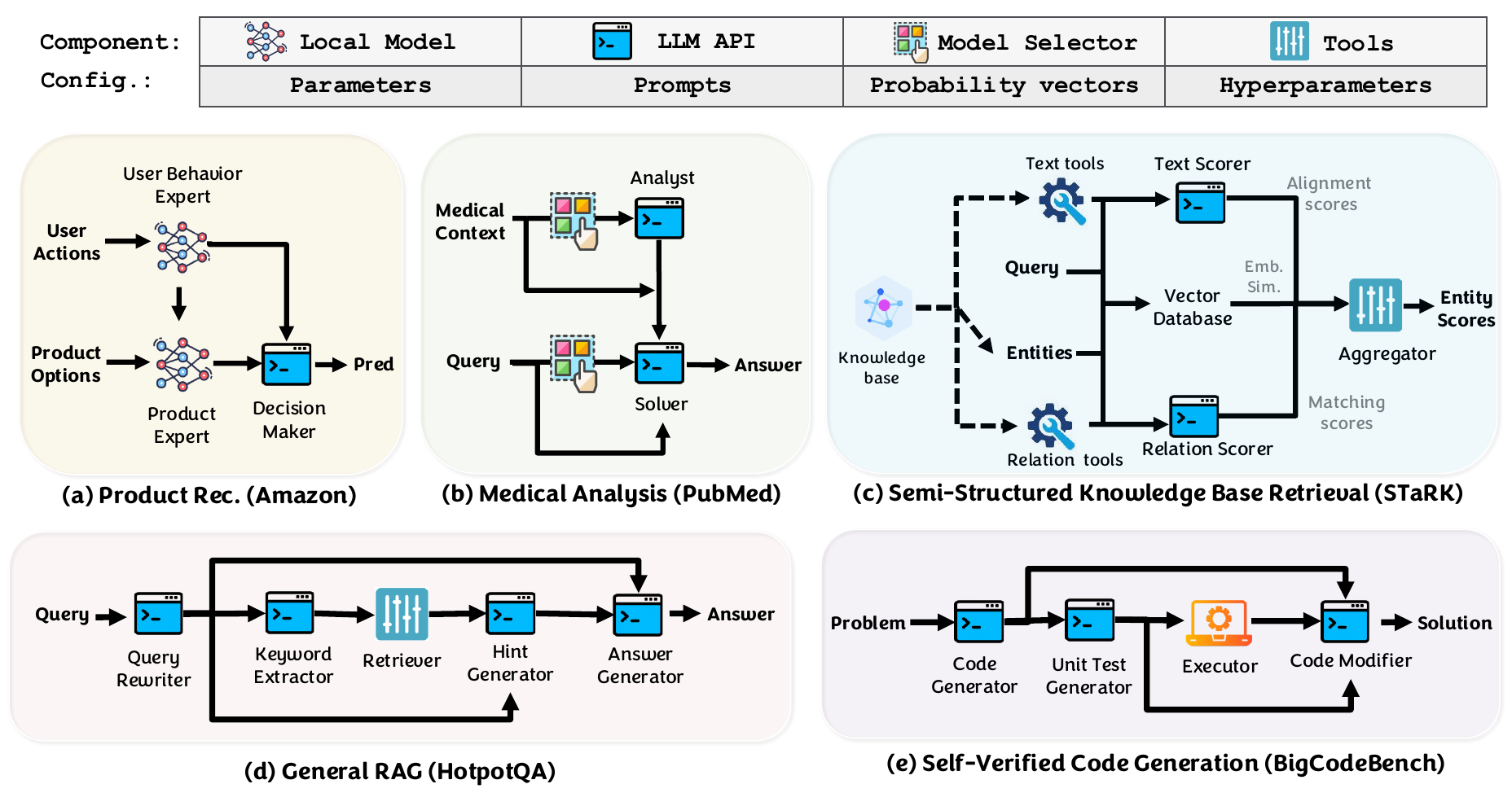}
    \vspace{-5pt}
    \caption{Five real-world and challenging compound AI systems. The goal is to automatically optimize the \config{} across a heterogeneous set of components and parameters, \eg model parameters, prompts, model selection choice, and hyperparameters. See Appendix~\ref{app:system} for details. 
    }
    \label{fig:tasks}
    \vspace{-12pt}
\end{figure}

\section{Introduction}
Modern AI systems increasingly employ compound systems that integrate multiple complex \obj{}s, such as Large Language Models (LLMs), tool/function calls, and traditional machine learning models like retrievers~\citep{textgrad,dsp,multiagent_debate}.
These \obj{}s collaborate to process heterogeneous data sources and solve complex tasks through specialized subtask allocation~\citep{compound-ai-blog,mars,scalable,orchestrate,model_selection}.
While compound AI systems have yielded performance advantages over monolithic models~\citep{multiagent_debate,agentcoder,alphacode2,rag,dylan}, they can be highly sensitive to the failure of individual \obj{}s, which leads to cascading failures in the final results~\citep{multiagent_fail, faulty_multiagent_resilience, reasoning_disruption, interm_failures}. 
For example, if an LLM misinterprets an input query, it can retrieve irrelevant or misleading information. This leads to subsequent tool calls operating on incorrect inputs, producing unreliable outputs throughout the system.
Therefore, optimizing these systems as a whole is crucial for maintaining reliability and global system performance (\ie \glob{}s).

However, optimizing these compound AI systems end-to-end is fundamentally challenging due to their non-differentiable nature. Second, it is hard to jointly optimize heterogeneous \config{}s (textual or numerical, continuous or discrete) from individual \obj{}s, including prompts, hyperparameters, model selections, and even model weights. Moreover, running the entire compound AI system during optimization to achieve \glob{} is costly.

Previous works have largely focused on optimizing specific \config{}s in isolation, such as optimizing prompts through textual feedback~\citep{textgrad, dsp, llm_as_opt, self_refine, avatar} or model selection through iterative search~\citep{model_selection,frugalgpt}. Yet these approaches can fail to capture critical bottlenecks. For example, a perfectly optimized prompt can struggle to compensate for a poorly chosen model. \iclr{Even when the \obj{}s are individually well optimized, they may still collaborate suboptimally, as the upstream \obj{} might not have visibility into which inputs are effective for the downstream \obj{}s. Consequently, previous methods typically require costly system runs over many configurations to identify the best configuration for the \obj{}s to work well together, leading to low data efficiency due to the large number of configurations.}


\xhdr{Present work}  
Here we propose \optimas{} (Figure~\ref{fig:overview}), a unified framework for the effective and data-efficient optimization of compound AI systems. The core idea is to \textit{learn} a \textbf{globally aligned} \textit{\RF{}} (\abbr{}) per \obj{}, such that independently maximizing a component’s \loc{} still reliably improves the \glob{}s. \revision{We show that under mild conditions, our approach converges reliably, providing strong theoretical guarantees.}
Furthermore,  since the learned \abbr{}s can be used to optimize \obj{}s locally, \optimas{} has \revision{higher data efficiency by avoiding extensive runs of the entire compound AI system to achieve high \glob{}.}  

Specifically, each \abbr{} estimates the contribution of a \obj{}’s output to the \glob{}. All \abbr{}s are implemented using a shared LLM backbone, with separate projection heads for each \obj{} to produce component-specific rewards. 
 We propose a \textbf{lightweight adaptation} mechanism using mini-batch preference data to ensure the \abbr{}s remain aligned with the evolving system configuration (Figure~\ref{fig:opt}). Leveraging this decentralized structure, \optimas{} applies specific optimization method to each \obj{} based on its configuration type. For example, reinforcement learning for model parameters~\citep{dpo,ppo} or metric-guided search for prompts and hyperparameters~\citep{llm_as_opt,mipro,search}. 
Overall, \optimas{} iteratively updates heterogeneous configurations towards a higher \glob{} by using each adaptive \abbr{} as an objective. \iclr{By optimizing a \obj{} to maximize its \loc{}, \optimas{} reduces the entire system runs to maintain higher data efficiency.}

We conduct extensive experiments to evaluate \optimas{} across five real-world compound systems (Figure~\ref{fig:tasks}), including challenging settings such as behavior-driven product recommendation and medical analysis. \optimas{} consistently outperforms strong baselines, achieving an average \revision{relative} improvement of \meanimprov\% \revision{with higher data efficiency}, while baseline methods occasionally improve performance. For example, while DSPy improves the performance on the multi-hop QA system, it may degrade performance on other tasks, such as product recommendation. In contrast, \optimas{} is the \emph{only} method that improves performance across \emph{all} five tasks, consistent with our theoretical guarantee (Section~\ref{theory}) that aligning local and global rewards enables effective optimization. 


\begin{table}[t]
  \centering
  \caption{A comparison of \optimas{} with selected methods. \optimas{} optimizes compound systems with heterogeneous configurations and enables higher data efficiency with local optimization to reduce number of system runs. We prove \optimas{}'s convergence under mild conditions.
  }
  \label{tab:setting}
  \vspace{-8pt}
  \resizebox{1\textwidth}{!}{%
  \begin{tabular}{lcccc}
    \toprule
     & {Supports} & { Optimizes} 
     & Data
     & {Convergence}  \\
    & {compound system} 
    & {heterogeneous config.} 
    & efficiency
    & guarantee\\
    \midrule
    OPRO~\citep{llm_as_opt} & \textcolor{RedOrange}{\ding{55}} & \textcolor{RedOrange}{\ding{55}} & \textcolor{RedOrange}{\ding{55}} & \textcolor{RedOrange}{\ding{55}}\\
    
    DSPy~\citep{dsp} & \textcolor{LimeGreen}{\ding{52}} & \textcolor{RedOrange}{\ding{55}} & \textcolor{RedOrange}{\ding{55}} & \textcolor{RedOrange}{\ding{55}} \\
    TextGrad~\citep{textgrad} & \textcolor{LimeGreen}{\ding{52}} & \textcolor{RedOrange}{\ding{55}} & \textcolor{RedOrange}{\ding{55}} & \textcolor{RedOrange}{\ding{55}} \\
    LLMSelector~\citep{model_selection} & \textcolor{LimeGreen}{\ding{52}} & \textcolor{RedOrange}{\ding{55}} & \textcolor{RedOrange}{\ding{55}} & \textcolor{LimeGreen}{\ding{52}}\\
    \optimas{} & \textcolor{LimeGreen}{\ding{52}} 
    & \textcolor{LimeGreen}{\ding{52}} 
    & \textcolor{LimeGreen}{\ding{52}}
    & \textcolor{LimeGreen}{\ding{52}}\\
    \bottomrule
  \end{tabular}
  }
  \vspace{-10pt}
\end{table}

\section{Related Work}
    
\xhdr{Optimizing LLM single-step generation}
Prior work extensively optimizes prompts for Large Language Models (LLMs) in single-step generation to improve performance~\citep{llm_as_opt, self_refine, evoprompt, reflexion, admn, godel, chen2025bootstrapping, Williams92}, but these methods are limited in their ability to handle complex, multi-step tasks. For example, addressing complex queries often requires combining multiple \obj{}s—such as LLMs, tools, and machine learning predictors—to obtain accurate predictions.

\xhdr{Optimizing multi-\obj{}/multi-step generation}
Compound AI systems consisting of multiple \obj{}s enable more complex planning and specialized processing at each task step~\citep{dsp, react, dylan, multiagent_debate, aflow}. Previous studies typically optimize different \obj{}s separately, such as optimizing LLM prompts~\citep{dsp, avatar, textgrad}, fine-tuning model weights using supervised learning~\citep{sirius, a-bad-bo} or reinforcement learning~\citep{dcir, mmoa_rag}, developing model routing~\citep{model_selection} \revision{ and layer grouping~\citep{chen2023parameterefficientfinetuningdesignspaces}} strategies, and selecting hyperparameters~\citep{multiagent_ml, pmlr-v80-falkner18a, pmlr-v80-pham18a, liu2019dartsdifferentiablearchitecturesearch}.
In contrast, \optimas{} enables end-to-end optimization across \textit{all} \obj{}s. 

\xhdr{Reward modeling for multi-step tasks}
To provide more fine-grained supervision, recent works break down \glob{}s (\eg answer accuracy) into more targeted signals (\ie dense/process rewards). 
Representative approaches include leveraging or bootstrapping from human step-wise annotations~\citep{verify,abs-2211-14275,rprm};  hierarchical planning that assigns rewards to error correction steps~\citep{hrm}; using Monte Carlo Tree Search to assign credit to intermediate reasoning steps~\citep{math_shepherd,slrm,jiao2024lpr,chen-etal-2024-step,rewarding_pregress} or actions~\citep{armap,prm_agent}.
\iclr{Recently, \citet{a-bad-bo} leverage Bayesian optimization to decompose global losses into local losses for optimizing model weight.
}
\iclr{Optimas differs by dynamically aligning local rewards with global rewards through preference-based adaptation. This design is applicable to both differentiable and non-differentiable configurations, without requiring fixed decomposition or extensive retraining. The local optimization avoids extensive system runs and offers higher data efficiency. Moreover, we provide theoretical analysis to prove the convergence of our framework. We highlight our key contributions in Table~\ref{tab:setting}.
}

%

\section{Problem Formulation: Optimizing Compound AI Systems}
\xhdr{Compound AI system}
A compound AI system is represented as a directed acyclic graph \(\mathcal{G} = (\mathcal{C}, \mathcal{E})\),  
where \(\mathcal{C} = \{ C_k \}_{k=1}^K\) is a set of \(K\) distinct \obj{}s (task nodes) and \(\mathcal{E}\) is the set of all possible directed edges between components.  
A \obj{} of the compound system can be an LLM, a general machine learning model, a model selector, \etc 
We denote the input and output to each component \(C_k\) as \(x_k\) and \(y_k\), respectively. 
The system input is treated as a source node $C_0$. 

The system can operate with dynamic planning: for each input instance \(x\), the connections \(\mathcal{E}(x) \subseteq \mathcal{E}\) between the components can be adaptive. 
A directed edge \((C_i, C_j) \in \mathcal{E}(x)\) indicates that the output of component \(C_i\) is routed as input to component \(C_j\) when processing instance \(x\). By default, we assume the \obj{} indices follow the topological order over \(\mathcal{E}\), meaning that $C_i$ is the upstream component of $C_j$ if $i<j$.

\xhdr{Component \config{}s} A \obj{} \(C_k:(x_k;\mathbf v_k)\mapsto y_k\) 
is controlled by a \config{} policy
\(\mathbf v_k\). 
The \config{} space \(\mathcal{V}\) can either be empty (indicating no optimizable \config{} for the component), discrete (\eg textual prompts or model selections), or continuous (\eg model parameters or hyperparameters). 
We denote the joint \config{} policy by
$
\mathbf v = (\mathbf v_1, \dots, \mathbf v_K).
$


\xhdr{Forward execution}
For a given input \(x\) and \config{} policy \(\mathbf v\),  
the system executes components in topological order over the edge set \(\mathcal{E}(x)\):
$
y_k = C_k\bigl(\{ y_i \mid C_i \in pa(C_k) \};\, \mathbf v_k \bigr), 
$
where $pa(C_k)$ denotes all the parents of component $C_k$ over \(\mathcal{E}(x)\). 
For clarity, we define the overall system as \( f(x;\, \mathbf v) := y \), where \(y\) is a collection of outputs from one or more components.  

\xhdr{Optimization objective}
Given a dataset $\mathcal{D}$ with initial inputs and a user-defined \glob{} function \(R: \mathcal{X} \times \mathcal{Y} \to \mathbb{R}\)  
that evaluates the final system output, 
the optimization goal is to find the \config{} policy \(\mathbf v^\star(x)\) that maximizes the expected \glob{}:
\begin{equation}
\mathbf v^\star = \arg\max_{\mathbf v} \; \mathbb{E}_{x\sim\mathcal{D}} \bigl[ R(x, f(x;\, \mathbf v)) \bigr].
\label{eq:objective}
\end{equation}
\vspace{-20pt}

\section{\optimas{}: Globally Aligned Local Rewards for Optimization}
\label{sec:method}

\xhdr{Challenges} Directly optimizing the objective in Eq.~\ref{eq:objective} is difficult. 
As the \config{} spaces are typically non-differentiable, gradient-based optimization cannot be used.
Moreover, each policy \(\mathbf v_k\) may control a different \config{} type, so the joint policy \(\mathbf v\) can span heterogeneous spaces. 
Therefore, previous efforts~\citep{textgrad,sirius,model_selection,dsp} largely focus on optimizing the policy for single types of \config{}s, which simplifies the optimization problem; however, this also leads to suboptimal compound systems. 

\xhdr{Key intuition}  
To address the challenges, our approach (Figure~\ref{fig:opt}) learns \RF{}s (\abbr{}s) that align with the \glob{} for individual \obj{}s, allowing \revision{local and independent optimization} on heterogeneous \obj{}s using different optimization approaches.

\iclr{Such \textbf{local-global alignments} (Section~\ref{sec:unit}) encourage that the \glob{} to increase during local optimizations (Section~\ref{sec:optimize}). }
\iclr{Moreover, as the system \config{}s change during optimization, the \abbr{}s should be adapted to remain aligned.}
To ensure alignment, \optimas{} employs a \textbf{lightweight adaptation} mechanism that updates \abbr{}s \revision{with minimal data sampled from the system}, preserving consistency with the \glob{} (Section~\ref{sec:evolve}).


\subsection{Learning \RF{}s}
\label{sec:unit}

\xhdr{Definition (\RF{} (\abbr))} An \abbr{}  on \obj{} $C_k$ is defined as $r_k: (x_k, y_k) \rightarrow \mathbb{R}$, which evaluates the component’s output \(y_k\) given the provided context \(x_k\).

\xhdr{Implementation} We implement all \abbr{}s with a LLM backbone $\phi$ and separate linear heads $h_k$ for a component $C_k$. 
The backbone encodes the concatenated text inputs 
$[x_k, y_k]$ into an embedding, and the corresponding head projects this embedding to a scalar reward value. Using such a multitask neural network ensures scalability with large number of components and reduces memory costs. Specifically, each \abbr{} is modeled as:
\begin{equation}
    r_k(x_k, y_k) = h_k \circ \phi([x_k, y_k])), 
\quad \text{for all } k \text{ if } \mathbf{v}_k \text{ is non-empty. } 
\end{equation}


\xhdr{Property (Local–global alignment)}
An \abbr{} \(r_k\) is said to be \emph{aligned} with the \glob{}
\(R\) if, for every input \(x\) and for any two candidate outputs
\(y_k^{+}, y_k^{-}\) of \(C_k\),
\begin{equation}
\label{eq:alignment}
\begin{aligned}
&r_k(x_k, y_k^{+}) \;\ge\; r_k(x_k, y_k^{-}) \\[2pt]
&\Longrightarrow\;
\mathbb{E}_{\text{downstream}}
\Bigl[ R\bigl(x, f(x;\mathbf v_{-k}\bigr) \,\big|\, y_k^{+}\Bigr]
\ge
\mathbb{E}_{\text{downstream}}
\Bigl[ R\bigl(x, f(x;\mathbf v_{-k})\bigr) \,\big|\, y_k^{-} \Bigr],
\end{aligned}
\end{equation}
where  
\(\mathbf v_{-k}\) denotes the configurations of all \emph{downstream}
components (those that directly or indirectly receive information originating
from \(C_k\)). 
The expected \glob{} for each candidate output is estimated via Monte Carlo sampling. This involves executing the downstream components with the candidate output \rebuttal{and the outputs from the non-downstream components} fixed, capturing their stochasticity, and averaging the resulting \glob{}s from the final system outputs.

\xhdr{Objective of reward functions}  
\iclr{To make each \abbr{} \(r_k\) conform to the local-global alignment property, we collect \(\mathcal{D}_k(\mathbf v)\), a preference dataset under the current
system configuration~\(\mathbf v\), and train each \(r_k\) using a pairwise
log‑sigmoid ranking loss}:
\begin{equation}
\label{eq:reward_model}
\mathcal{L}_k(\mathcal{D}_k(\mathbf v))
\,=\;
-\,
\mathbb{E}_{(x_k,\,y_k^{+},\,y_k^{-}) \,\sim\, \mathcal{D}_k(\mathbf v)}
\!\Bigl[
    \log \sigma\bigl(r_k(x_k, y_k^{+}) - r_k(x_k, y_k^{-})\bigr)
\Bigr],
\end{equation}

The collection of \(\mathcal{D}_k(\mathbf v)\) follows the following steps: 1) execute the compound system up to \(C_k\) and record the partial trajectory
\(\bigl\langle x,\,(x_1,y_1),\dots,(x_{k-1},y_{\,k-1})\bigr\rangle\);  
2) sample two candidate outputs for \(C_k\) (\eg via higher-temperature decoding or alternate hyperparameters); and 3) estimate their expected task metrics according to the expectation terms on the right-hand side of Eq.~\ref{eq:alignment}. The output with the higher expected value is labeled as \(y_k^{+}\), and the other as \(y_k^{-}\) in \(\mathcal{D}_k(\mathbf{v})\).

\begin{figure}[t]
    \centering
    \includegraphics[width=1\textwidth]{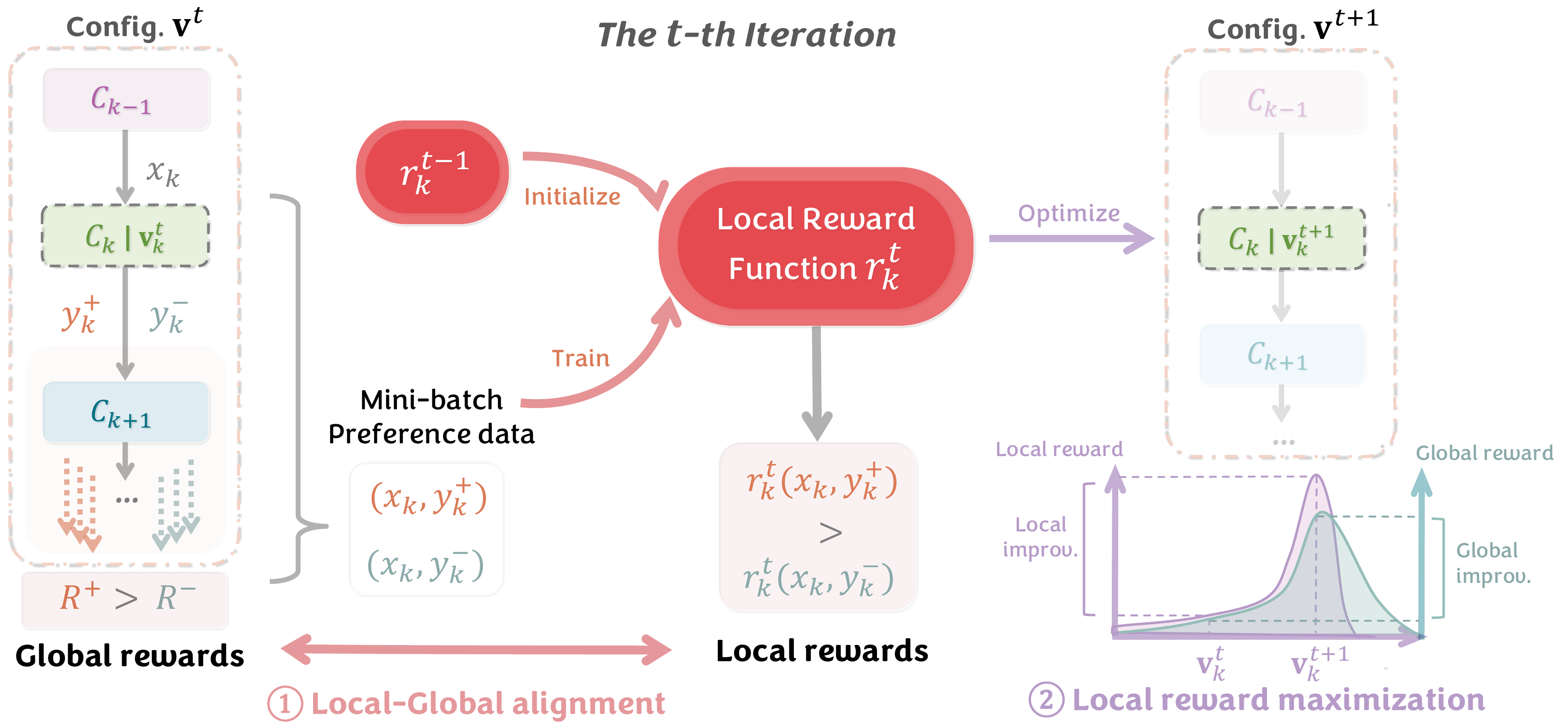}
    \vspace{-10pt}
    \caption{\optimas{} optimization iteration.
    At each iteration, \optimas{} updates a component \(C_k\) by first collecting a mini-batch of preference data and adapting its \RF{} \(r_k\) to remain aligned with the global task metric. This alignment \revision{helps ensure} that optimizing the component to maximize its \loc{} also improves the \glob{}.}
    \label{fig:opt}
    \vspace{-8pt}
\end{figure}

\subsection{Adaptive \RF{}s}
\label{sec:evolve}

\xhdr{Problem: Misaligned \abbr{}s in the evolving system}  
As the system configuration changes during optimization, \abbr{}s trained under a previous configuration \(\mathbf{v}^t\) may become inaccurate under the updated configuration \(\mathbf{v}^{t+1}\).  
\iclr{Specifically, (1) after updating \(C_k\), the same outputs from its upstream component \(C_i\) ($i<k$) may lead to different \glob{}, making $r_i$ misaligned.  
(2) Its downstream components \(C_j\) ($j>k$) now receive inputs generated by the updated \(C_k\), which may fall outside the distribution seen by their \abbr{}s.  }
These shifts accumulate over time, degrading the local–global alignment property (Eq.~\ref{eq:alignment}) that \abbr{}s are designed to satisfy.

However, retraining all \abbr{}s from scratch after every \config{} update \iclr{is expensive}.
To address this, we develop a lightweight adaptation strategy that incrementally refines the \abbr{}s as the system changes, maintaining alignment without full retraining. 


\xhdr{Stage 1: Initial reward modeling}  
Given the initial system configuration and a dataset with initial inputs, we first construct an offline preference dataset for each \obj{} and train its \abbr{} to convergence.  
This offline phase establishes well-aligned \abbr{}s that accurately reflect each component’s contribution to the \glob{}.

\xhdr{Stage 2: Online reward function adaptation}  
\iclr{
\rebuttal{When any configuration changes,} we sample a small batch of input data and construct a mini-batch of preference data $\mathcal{B}_k$ for each component \(C_k\) using the steps described in Section~\ref{sec:unit}. We then optimize the \abbr{} on $C_k$ on the objective
$
\label{eq:online_adaptation}
\mathcal{L}_k(\mathcal{B}_k)
$
following the definition in Eq.~\ref{eq:reward_model}.
To enable stable optimization and improve data efficiency, we maintain a buffer of previous generated preference data into $\mathcal{B}_k$ .
This adaptation helps maintain the local–global alignment property in Eq.~\ref{eq:alignment}. 
}


\subsection{Optimization with Globally aligned \RF{}s}
\label{sec:optimize}


\xhdr{Local Optimization} 
\iclr{As each \obj{} has its own \abbr{}}, \optimas{} flexibly applies a specialized optimization method for each component. See details in Appendix~\ref{app:opt_details}. Concisely,  
\vspace{-5pt}
\begin{itemize}[leftmargin=*, itemsep=2pt]
    \item For textual prompts, we use prompt optimization algorithms such as OPRO~\citep{llm_as_opt} that ranks candidate prompts by their average \loc{} and select the best-performing one. 
    \item For components that are trainable models (\eg an LLM), we apply reinforcement learning—such as Proximal Policy Optimization (PPO)~\citep{ppo}—using the \abbr{} as the critic. 
    \item For discrete or low-dimensional continuous configurations, such as model selection or hyperparameter tuning, we construct a probability distribution over candidate values based on their \loc{}s and sample from it to update the configuration.
\end{itemize}

\xhdr{Overall algorithm (Figure~\ref{fig:opt})}  \iclr{Starting with the initial system configuration,
\optimas{} leverages initial reward modeling to learn a set of \abbr{}s that are well-aligned with the \glob{}. At each iteration optimization, \optimas{} randomly selects a component to optimize, conducts local optimization, \rebuttal{and if the local configuration change leads to improved global reward, it updates the system and adapts the \abbr{} using minimal amount of data}. To prevent potential cascading errors, the new configuration is accepted only if it improves the global reward on a small validation set.
Since the optimization with \abbr{}s is conducted locally, the number of system runs to achieve a high \glob{} is reduced, as we later show in the experiments.}
The detailed algorithm is provided in Appendix~\ref{app:algorithm}.




\subsection{Theoretical Insights}
\label{theory}
We prove that the local–global alignment property holds for the \abbr{}s constructed in Section~\ref{sec:unit}.

\begin{thm}[Informal]\label{thm:alignment}
Under regularity conditions, the \rebuttal{minimizer} of \eqref{eq:reward_model} satisfies the local-global alignment property (\eqref{eq:alignment}). In addition, maximizing $r_k\bigl(x_k,\,C_k(x_k;\mathbf{v}_k)\bigr)$ over $\mathbf{v}_k$ and maximizing $R(x,f(x;\mathbf{v}_{-k})\mid C_k(x_k;\mathbf{v}_k))$ over $\mathbf{v}_k$ will yield the same solution. 
\end{thm}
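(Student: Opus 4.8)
The plan is to reduce both claims to a single structural fact: the population optimizer of the ranking objective \eqref{eq:reward_model} is a strictly increasing transformation of the latent score that the preference labels encode. To set this up, I would first define, for each candidate output $y_k$ at component $C_k$, the downstream value
\[
G_k(x_k, y_k) \;:=\; \mathbb{E}_{\text{downstream}}\Bigl[ R\bigl(x, f(x;\mathbf v_{-k})\bigr) \,\big|\, y_k \Bigr],
\]
which is exactly the quantity appearing on the right-hand side of the alignment property \eqref{eq:alignment}. By the construction of $\mathcal{D}_k(\mathbf v)$ in Section~\ref{sec:unit}, the label assigns $y_k^{+}\succ y_k^{-}$ whenever $G_k(x_k,y_k^{+}) > G_k(x_k,y_k^{-})$, so $G_k$ is precisely the utility governing the preferences.

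The core step is an identifiability argument in the spirit of the Bradley--Terry model. I would assume, as one of the regularity conditions, that pairwise preferences follow the logistic link $\Prob(y_k^{+}\succ y_k^{-}) = \sigma\!\bigl(G_k(x_k,y_k^{+}) - G_k(x_k,y_k^{-})\bigr)$, together with enough expressiveness of the head--backbone class to represent $G_k$ up to an additive constant and enough support in $\mathcal{D}_k(\mathbf v)$ to cover the relevant output pairs. Minimizing the population version of \eqref{eq:reward_model} then amounts, pair by pair, to choosing the reward gap that matches the conditional preference probability; a first-order stationarity computation gives $\sigma\bigl(r_k(x_k,y_k^{+}) - r_k(x_k,y_k^{-})\bigr) = \Prob(y_k^{+}\succ y_k^{-})$, and because the latent scores induce mutually consistent gaps, these pointwise optima are simultaneously attainable by a single function, whence $r_k^\star = G_k + c$ on the support for some constant $c$ independent of $y_k$. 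Since $u \mapsto u + c$ is strictly increasing, the ordering of $r_k^\star$ coincides with the ordering of $G_k$, which is exactly the implication in \eqref{eq:alignment}; this establishes the first claim.

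For the second claim I would exploit that strictly increasing transformations preserve $\arg\max$. Writing $r_k^\star(x_k,\cdot) = \psi\bigl(G_k(x_k,\cdot)\bigr)$ with $\psi$ strictly increasing (here $\psi(u)=u+c$), we obtain
\[
\arg\max_{\mathbf v_k} \; r_k^\star\bigl(x_k, C_k(x_k;\mathbf v_k)\bigr)
\;=\;
\arg\max_{\mathbf v_k} \; G_k\bigl(x_k, C_k(x_k;\mathbf v_k)\bigr),
\]
and the right-hand objective is by definition $\mathbb{E}_{\text{downstream}}\bigl[R(x, f(x;\mathbf v_{-k})) \mid C_k(x_k;\mathbf v_k)\bigr]$. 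Hence the two maximization problems share the same solution set.

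The main obstacle I anticipate is making the identifiability step rigorous as a \emph{uniform}, everywhere statement rather than a pointwise one: the alignment property \eqref{eq:alignment} is quantified over all inputs $x$ and all output pairs, so I must ensure the population minimizer recovers the correct global ordering on the entire relevant domain, not merely on pairs seen with high probability. This is where the coverage and realizability conditions do the real work—without sufficient support, pairwise stationarity pins down $r_k$ only on a sub-collection of pairs, and the recovered ordering may fail to be transitive or complete. A secondary subtlety is that with deterministic (hard) labels the logistic link degenerates and the reward gap is driven to infinity; I would therefore either adopt the stochastic-link assumption above or relax the target to ordinal consistency, noting that only the induced \emph{order}, not the exact values, is needed for both conclusions.
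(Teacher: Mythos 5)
Your proposal follows essentially the same route as the paper: both posit a logistic (Bradley--Terry) labeling model in which the preference probability is a sigmoid of the gap in expected downstream reward, invoke the calibration property of the population log-sigmoid loss to conclude that the optimal $r_k$ is a strictly increasing (affine) transform of the downstream value $G_k$, and then use order preservation to obtain both the alignment property and the coincidence of the two $\arg\max$ problems. Your remark about the deterministic-label degeneracy is apt---the paper addresses it by explicitly assuming the stochastic $\sigma_\alpha$ link (with $\alpha=+\infty$ only as a limiting case), which yields $r_k = \alpha\, G_k$ for some $\alpha>0$.
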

We defer the formal statement for this theorem to appendix \ref{app:theory}.
Since solving \eqref{eq:objective} is generally challenging, we introduce some regularity conditions to make the convergence analysis tractable. 

As the configurations $\mathbf{v}$ are heterogeneous, where some of the coordinates are discrete, and some are continuous, without loss of generality, we assume the first $M$ configurations $\mathbf{v}^{(1)}=\{\mathbf{v}_1,..., \mathbf{v}_M\}$ are continuous, and the last $(K-M)$ configurations $\mathbf{v}^{(2)}=\{\mathbf{v}_{M+1},...,\mathbf{v}_K\}$ are discrete. We write the objective function $\mathbb{E}_{x\sim\mathcal{D}} \bigl[ R(x, f(x;\, \mathbf v) \bigr]$ as $l(\mathbf{v})=l(\mathbf{v}_1,\mathbf{v}_2,...,\mathbf{v}_K):=l(\mathbf{v}^{(1)}, \mathbf{v}^{(2)})$.

\begin{asm}\label{asm: unique}
    Suppose for any given configuration $\mathbf{v}^{(2)}$,  the initial level set $\{\mathbf{v}^{(2)}: l(\mathbf{v}^{(1)}, \mathbf{v}^{(2)})\le l(\mathbf{v}^{0, (1)}, \mathbf{v}^{(2)})\}$ is a compact set, where $\mathbf{v}^{0, (1)}$ is the initialization used in the algorithm for $\mathbf{v}^{(1)}$. In addition, for every component $k$ and every fixed $\mathbf{v}_{-k}$, $l(\cdot, \mathbf{v}_{-k})$ has a unique maximizer.
\end{asm}

\begin{thm}\label{thm:convergence}
    Under Assumption~\ref{asm: unique}, the algorithm will converge to the component-wise maximum, that is, the limit point $\mathbf{v}^*$ satisfies $$
l(\mathbf{v}^*)\ge l(\mathbf{v}_k, \mathbf{v}^*_{-k}),
    $$
    for any $k\in[K]$ and any $\mathbf{v}_k$. 
\end{thm}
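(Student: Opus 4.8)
The plan is to recognize the procedure as \emph{cyclic block coordinate ascent} on the global objective $l$ and then invoke the classical convergence theory for such schemes, the one indispensable ingredient being the uniqueness clause of Assumption~\ref{asm: unique}. By Theorem~\ref{thm:alignment}, maximizing the local reward $r_k(x_k, C_k(x_k;\mathbf{v}_k))$ over $\mathbf{v}_k$ in \eqref{eq:config} shares its maximizer with maximizing $l$ over the block $\mathbf{v}_k$ with $\mathbf{v}_{-k}$ held fixed. I would therefore first record that, for the idealized analysis, each update is an \emph{exact} block maximization of $l$ (treating the trust region in \eqref{eq:config} as a practical safeguard that is inactive here), so that
\[
\mathbf{v}_k^{t+1} \;=\; \arg\max_{\mathbf{v}_k}\; l(\mathbf{v}_k, \mathbf{v}_{-k}^{t}),
\]
and the algorithm cycles through $k=1,\dots,K$.

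The first substantive step is to produce a convergent value sequence and a limit point. Since each update maximizes $l$ over one block, $l(\mathbf{v}^{t+1})\ge l(\mathbf{v}^{t})$, so $\{l(\mathbf{v}^t)\}$ is non-decreasing; the level-set condition in Assumption~\ref{asm: unique} keeps the continuous iterates in a compact set for the fixed discrete block, so $l$ is bounded along the trajectory and $l(\mathbf{v}^t)\to l^\ast$ for some finite $l^\ast$. In particular the consecutive gaps vanish, $l(\mathbf{v}^{t+1})-l(\mathbf{v}^t)\to 0$. The continuous part $\mathbf{v}^{t,(1)}$ lives in that compact set and the discrete part $\mathbf{v}^{t,(2)}$ takes finitely many values, so I would extract a subsequence along which $\mathbf{v}^{t}\to\mathbf{v}^\ast$ with the discrete block constant and equal to $\mathbf{v}^{\ast,(2)}$.

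The crux is upgrading this limit point to a component-wise maximum, which I would argue by contradiction: suppose some block $k$ and some $\tilde{\mathbf{v}}_k$ give $l(\tilde{\mathbf{v}}_k,\mathbf{v}^\ast_{-k})>l(\mathbf{v}^\ast)$. The uniqueness of the block maximizer, together with continuity of $l$ in the continuous coordinates, lets me apply Berge's maximum theorem to conclude that the block-argmax map $\mathbf{v}_{-k}\mapsto\arg\max_{\mathbf{v}_k} l(\mathbf{v}_k,\mathbf{v}_{-k})$ is continuous near $\mathbf{v}^\ast_{-k}$. Tracking the iterate just before block $k$ is updated along the convergent subsequence, continuity forces the post-update coordinate to converge to $\mathbf{v}_k^\star:=\arg\max_{\mathbf{v}_k} l(\mathbf{v}_k,\mathbf{v}^\ast_{-k})$, so the realized one-step improvement approaches $l(\mathbf{v}_k^\star,\mathbf{v}^\ast_{-k})-l(\mathbf{v}^\ast)\ge l(\tilde{\mathbf{v}}_k,\mathbf{v}^\ast_{-k})-l(\mathbf{v}^\ast)>0$. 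This contradicts $l(\mathbf{v}^{t+1})-l(\mathbf{v}^t)\to 0$, so no improving block direction exists and $l(\mathbf{v}^\ast)\ge l(\mathbf{v}_k,\mathbf{v}^\ast_{-k})$ for every $k$ and every $\mathbf{v}_k$.

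I expect this last step to be the main obstacle. The uniqueness hypothesis is precisely what rules out the classical cycling counterexamples for coordinate ascent, and the delicate work is making the continuity-of-argmax argument rigorous over the mixed discrete--continuous space: justifying that the discrete block stabilizes along the subsequence, that the maximum theorem transfers optimality from the updated coordinates to the full limit point, and that cycling through all $K$ blocks (rather than one) preserves the conclusion, which is handled by applying the one-step reasoning successively to each block within a cycle.
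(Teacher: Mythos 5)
Your proposal follows the same skeleton as the paper's proof: both use Theorem~\ref{thm:alignment} to identify each local update with an exact block maximization of $l$ (treating the trust region in \eqref{eq:config} as inactive), both use monotonicity of $l(\mathbf{v}^t)$ together with the compact-level-set clause of Assumption~\ref{asm: unique}, and both dispose of the discrete coordinates by finiteness before treating the continuous block. The one genuine divergence is the last step: the paper observes that once the discrete block has stabilized the remaining iteration is block coordinate ascent on the continuous coordinates and simply invokes Theorem~4.1 of Tseng (2001), whereas you attempt a self-contained argument via Berge's maximum theorem and a contradiction with $l(\mathbf{v}^{t+1})-l(\mathbf{v}^t)\to 0$. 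Your route buys independence from the citation, but it is also where your sketch is thinnest: the step ``tracking the iterate just before block $k$ is updated along the convergent subsequence'' presupposes that those intermediate iterates also converge to $\mathbf{v}^\ast$, which is not automatic for a cyclic scheme --- within one sweep the earlier blocks may move the iterate away from the subsequential limit. Establishing $\mathbf{v}^{t+1}-\mathbf{v}^t\to 0$ (equivalently, that consecutive blocks share the same limit point) is exactly where the uniqueness hypothesis does its work in the classical proofs and in Tseng's theorem; you flag this as the main obstacle but do not actually close it. A second, smaller difference: you only extract a subsequence along which the discrete block is constant, while the paper argues the discrete block is eventually constant for \emph{all} $t>T$; the global version is what you need to conclude that the discrete coordinates of $\mathbf{v}^\ast$ are themselves block-optimal, so your compactness extraction should be strengthened to match. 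With those two points repaired (or by falling back on the same citation the paper uses), your argument goes through.
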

In fact, our theoretical analysis shows that by conducting local optimization, \optimas{} is essentially performing coordinate maximization. Therefore, existing convergence results for coordinate maximization directly apply. \rebuttal{Also, note that the block-coordinate (round-robin) updates adopted in \optimas{} do not guarantee global optimality in non-convex problems, but this does not constitute a flaw unique to our method; rather, it reflects a standard limitation of non-convex optimization broadly, and our global convergence guarantees only hold under additional structural assumptions, such as Polyak–Łojasiewicz or Kurdyka–Łojasiewicz conditions.}


\begin{table*}[t]
\centering
\caption{Performances of each method on the compound systems. The best and second-best results in each column are highlighted. Relative improvement is computed with respect to the best baseline. }
\vspace{-5pt}
\label{tab:main}
\resizebox{\textwidth}{!}{
\begin{tabular}{lccccc}
\toprule
& \begin{tabular}{c}\amazon{} \\ Product Rec.\\(Acc.)\end{tabular} 
& \begin{tabular}{c}\pubmed{} \\Medical Analysis \\(Acc.)\end{tabular} 
& \begin{tabular}{c} \stark{}\\Complex Retrieval\\(MRR)\end{tabular} 
& \begin{tabular}{c}\hotpot\\RAG\\(F1)\end{tabular} 
& \begin{tabular}{c}\code{}\\Verified Code Gen.\\(Pass Rate)\end{tabular} \\
\midrule
\rebuttal{Single LLM} & 20.20\std{1.43} & 54.13\std{2.73} & 0.00\std{0.00} & 21.58\std{1.24} & 35.47\std{0.34}\\
Unoptimized         & 21.21\std{3.78}  & 57.46\std{0.75}&  40.73\std{0.64} & 33.80\std{1.51} & \cellcolor{gray!10}36.67\std{1.35} \\
\rebuttal{REINFORCE}   & \cellcolor{gray!10}21.89\std{2.65} & - & -& -& - \\
LLMSelector         & -                & \cellcolor{gray!10}67.93\std{0.09}        & -               & -               & -               \\
HBC                 & 21.55\std{2.07}  & 58.80\std{0.58}& 36.95\std{0.59}         & 21.16\std{0.97} & 27.78\std{2.08} \\
TextGrad            & 20.88\std{3.53}  & \replace{55.80\std{1.08}}{56.96\std{2.24}}& 41.31\std{1.67}         & 24.86\std{1.19} & 35.71\std{0.10} \\
DSPy                & 18.18\std{0.82}  & 60.26\std{0.40} & \cellcolor{gray!10}41.40\std{0.04} & \cellcolor{gray!10}44.90\std{0.32} & 33.81\std{2.75} \\
\optimas{}             & \cellcolor{teal!10}\textbf{24.24\std{0.82}} & \cellcolor{teal!10}\textbf{69.13\std{0.33}}  & \cellcolor{teal!10}\textbf{50.54\std{0.70}}
 & \cellcolor{teal!10}\textbf{50.48\std{1.48}} & \cellcolor{teal!10}\textbf{38.92\std{0.36}} \\

Rel. Improv. & \cellcolor{blue!10}14.3\% & 1.8\%\cellcolor{blue!10}& \cellcolor{blue!10} 22.1\%& \cellcolor{blue!10}12.4\% & \cellcolor{blue!10}9.0\% \\
\bottomrule
\end{tabular}
}
\end{table*}

\begin{table*}[t]
    \centering
    \caption{Number of equivalent runs on the entire systems (in thousands) by TextGrad, DSPy, and \optimas{} in Table~\ref{tab:main}. We control the optimization process of each to use comparable system runs.}
    \vspace{-5pt}
    \label{tab:data_efficiency}
    \resizebox{1.0\textwidth}{!}{
        \begin{tabular}{lcccccc}
        \toprule
        & \amazon{} 
        & \pubmed{} 
        & \stark{} 
        & \hotpot  
        & \code{} & Average \\
        \midrule
        TextGrad & 0.32 & 0.70 & 0.70 & 2.12  & 0.18 & 0.80 \\
        DSPy     & 0.24 & 0.66 & 0.66 & 2.09  & 0.28 & 0.79 \\
        \optimas{}  & 0.31 & 0.52 & 0.51 & 2.02  & 0.21 & 0.71 \\
        \bottomrule
        \end{tabular}
        \vspace{-20pt}
    }
\end{table*}

\section{Experiments}
\label{sec:exp}

We firstly summarize the datasets, baselines, and metrics, with details provided in the appendix.

\xhdrnodot{Benchmarks \& Evaluation} (Appendix~\ref{app:dataset}).  
We evaluate \optimas{} on five real-world tasks:

\begin{itemize}[leftmargin=*, itemsep=2pt]
    \vspace{-8pt}
    \item \textbf{\amazon{}}~\citep{amazon}: A behavior-driven recommendation task based on Amazon products, evaluated using accuracy to measure if the predicted next item matches the ground-truth item.
    \vspace{-2pt}
    \item \textbf{\pubmed{}}~\citep{pubmedqa}: A clinical classification dataset derived from PubMed abstracts~\citep{pubmed}, evaluated by accuracy, defined as the proportion of predictions that exactly match the ground-truth labels.
    \vspace{-2pt}
    \item \textbf{\textsc{STaRK-Prime}}~\citep{stark}: A retrieval benchmark over semi-structured biomedical corpora, evaluated using Mean Reciprocal Rank (MRR).
    \item \textbf{\hotpot{}}~\citep{hotpotqa}: A multi-hop question answering dataset, evaluated using the F1 score between predicted and ground-truth answers.
    \vspace{-2pt}
    \item \textbf{\code{}}~\citep{bigcodebench}: The instruction split of BigCodeBench for self-verifying code generation, evaluated using pass rate.
\end{itemize}

\vspace{-5pt}
\xhdrnodot{Compound Systems} (Figure~\ref{fig:tasks}, \cf Appendix~\ref{app:system}). We design a compound system per benchmark with diverse and common patterns for agentic systems.
All systems are accessible in our code repository. 

\xhdrnodot{Baselines} (See Appendix~\ref{app:baselines} for details). We compare \optimas{} against five baselines: Unoptimized, REINFORCE~\citep{Williams92}, LLMSelector~\citep{model_selection}, Hierarchical Behavior Cloning (HBC)~\citep{hbc}, TextGrad~\citep{textgrad}, DSPy~\citep{dsp,mipro}. \rebuttal{Moreover, we provide a single LLM reference which prompts an LLM to complete the task directly. This reference is used for justifying our system design in the experimental setup. }

\begin{figure}[h]
    \centering
    \begin{subfigure}[t]{1\textwidth}
        \centering
        \includegraphics[width=\textwidth]{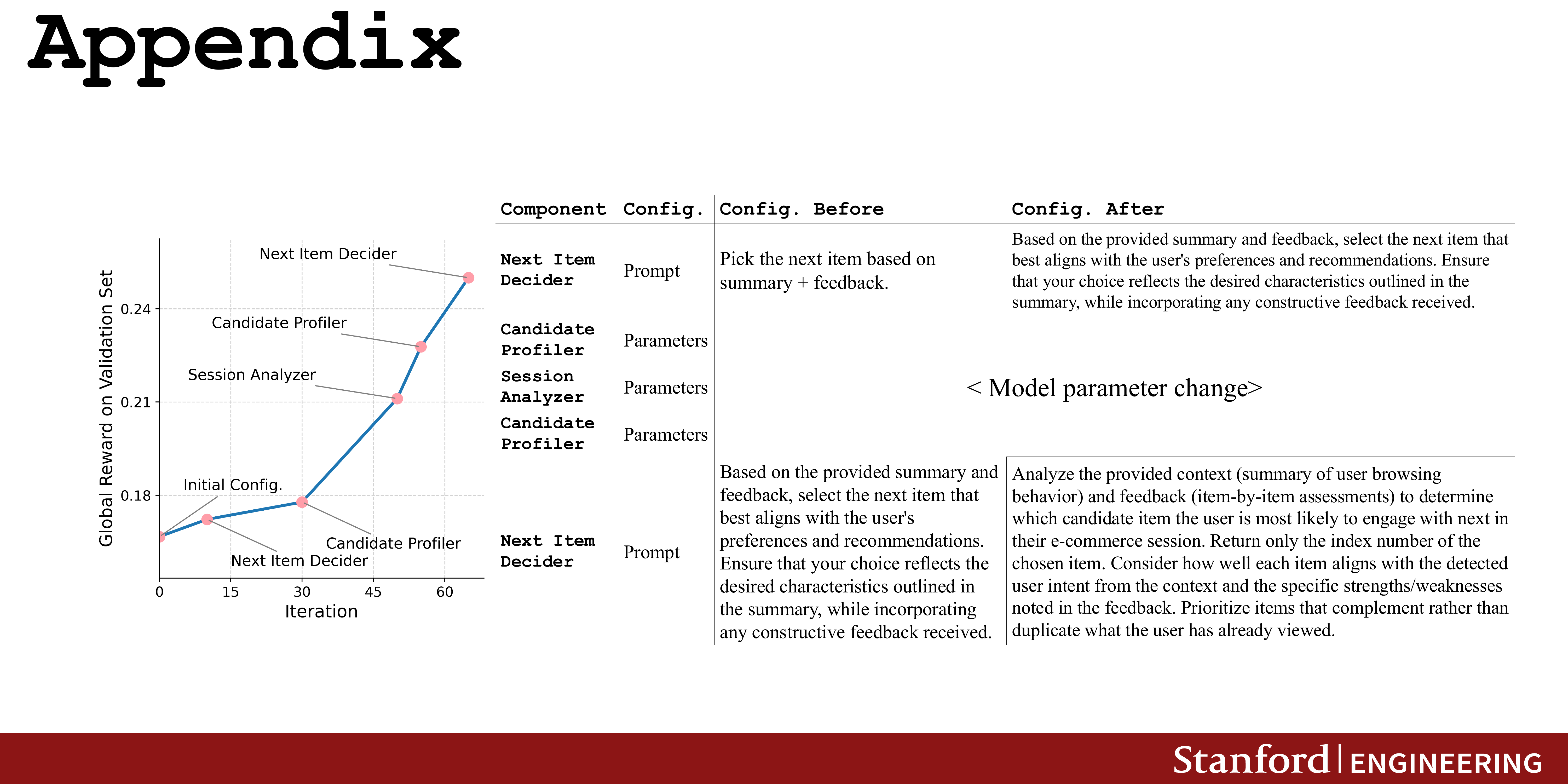}
        \vspace{-15pt}
        \caption{\amazon{} system }
        \vspace{7pt}
        \label{fig:dynamics_amazon_details}
    \end{subfigure}
    \hfill
    \begin{subfigure}[t]{1\textwidth}
        \centering
        \includegraphics[width=\textwidth]{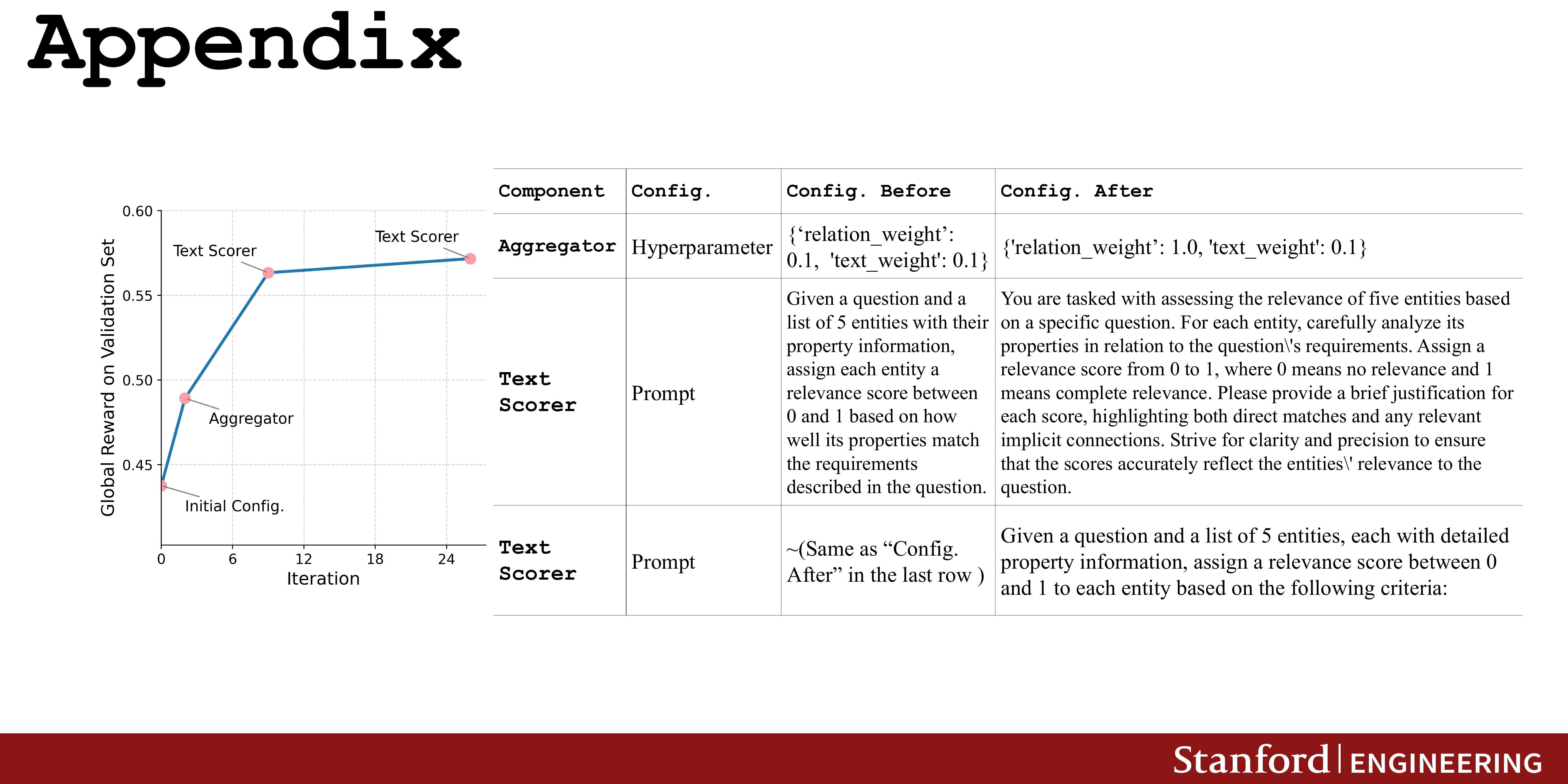}
        \vspace{-15pt}
        \caption{\stark{} system}
        \vspace{7pt}
        \label{fig:dynamics_stark_prime_details}
    \end{subfigure}
    \hfill
    \begin{subfigure}[t]{1\textwidth}
        \centering
        \includegraphics[width=\textwidth]{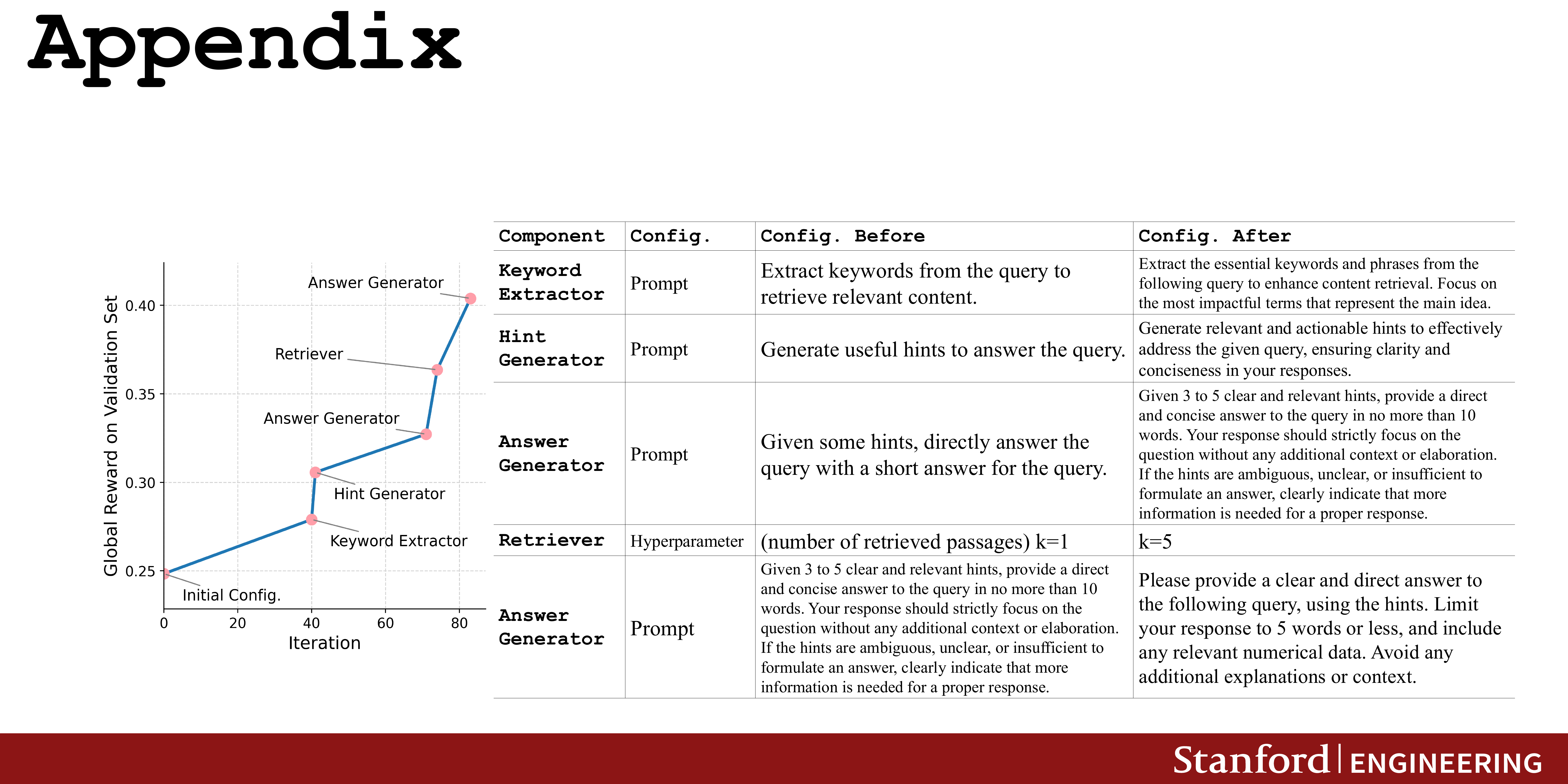}
        \vspace{-15pt}
        \caption{\hotpot{} system}
        \label{fig:dynamics_hotpotqa_details}
    \end{subfigure}
    \vspace{-5pt}
    \caption{Global reward and configuration updates of the three compound AI systems over the optimization iterations. For conciseness, we only show the local optimization steps that lead to an increase in \glob{} on the validation sets. The annotations show the optimized \obj{}s.
    }
    \label{fig:dynamics}
\end{figure}

\begin{table*}[t]
\centering
\vspace{-5pt}
\caption{Average pairwise ranking accuracy on validation sets, measuring how often the method assigns a higher score to the output with higher expected \glob{}.}
\vspace{-5pt}
\label{tab:pairwise_rank}
\resizebox{\textwidth}{!}{
\begin{tabular}{lcccccc}
\toprule
& \amazon{} 
& \pubmed{} 
& \stark{} 
& \hotpot  
& \code{} & Avg. \\
\midrule
LLM Judge & 51.25\% & 49.54\% & 54.37\% & 50.00\% & 42.45\% & 49.52\% \\
\optimas{} & 84.93\% & 65.28\% & 76.64\% & 72.40\% & 90.57\% & \avgrankingacc\% \\
\bottomrule
\end{tabular}
}
\vspace{-5pt}
\end{table*}

\vspace{-10pt}
\subsection{Performance of the systems during and after optimization.}

\xhdr{\iclr{Takeaway 1: \optimas{} leads to consistent and substantial improvements  (Table~\ref{tab:main})}}  We compare \optimas{} with the baselines under similar numbers of system runs. Under this controlled data cost (Table~\ref{tab:data_efficiency}), \optimas{} consistently improves \glob{}s across all compound systems, achieving an average relative improvement of \meanimprov\% compared to the best baseline. 

\rebuttal{For REINFORCE, while it yields performance gain from the unoptimized baseline, \optimas{} outperforms it by around 3\%. Moreover, REINFORCE needs collecting the reward signal from  downstream Monte Carlo sampling, which requires more than three times the data than \optimas{}}. While the strong baseline DSPy shows notable improvements on some datasets (\eg a 21.6\% gain on \hotpot{}), their performance may be inconsistent and can even degrade the system (\eg a 14.3\% drop on the \amazon{} dataset). For LLMSelector, it requires $2.8k$ times of forwarding through the entire system, which is $3x$ more expensive than \optimas{}. 

\xhdr{\iclr{Takeaway 2: Local optimization improves \glob{}s (Figure~\ref{fig:dynamics})}} 
We study how the configurations change in the local optimization. Within a small number of iterations, \optimas{} achieves a substantial average improvement of 41.7\% over the initial \glob{} on the validation sets, using lightweight and data-efficient updates. \iclr{Interestingly, we observe a mixed updates on prompts, model parameters, and hyperparameters, which can lead to improved \glob{}. For example, updating the prompt of \texttt{\small Text Scorer} in the 9-th iteration improves \glob{} from $0.49$ to $0.56$. Among these cases, they involve optimizing different components to achieve the highest \glob{} empirically, showing the importance of being able to optimize different types of configurations. }

\subsection{\iclr{Why and How \optimas{} Works: Alignment, Interpretability, and Efficiency}}

\begin{figure}[t]
  \centering

  \begin{subfigure}[t]{0.45\textwidth}
    \centering
    \includegraphics[width=\linewidth]{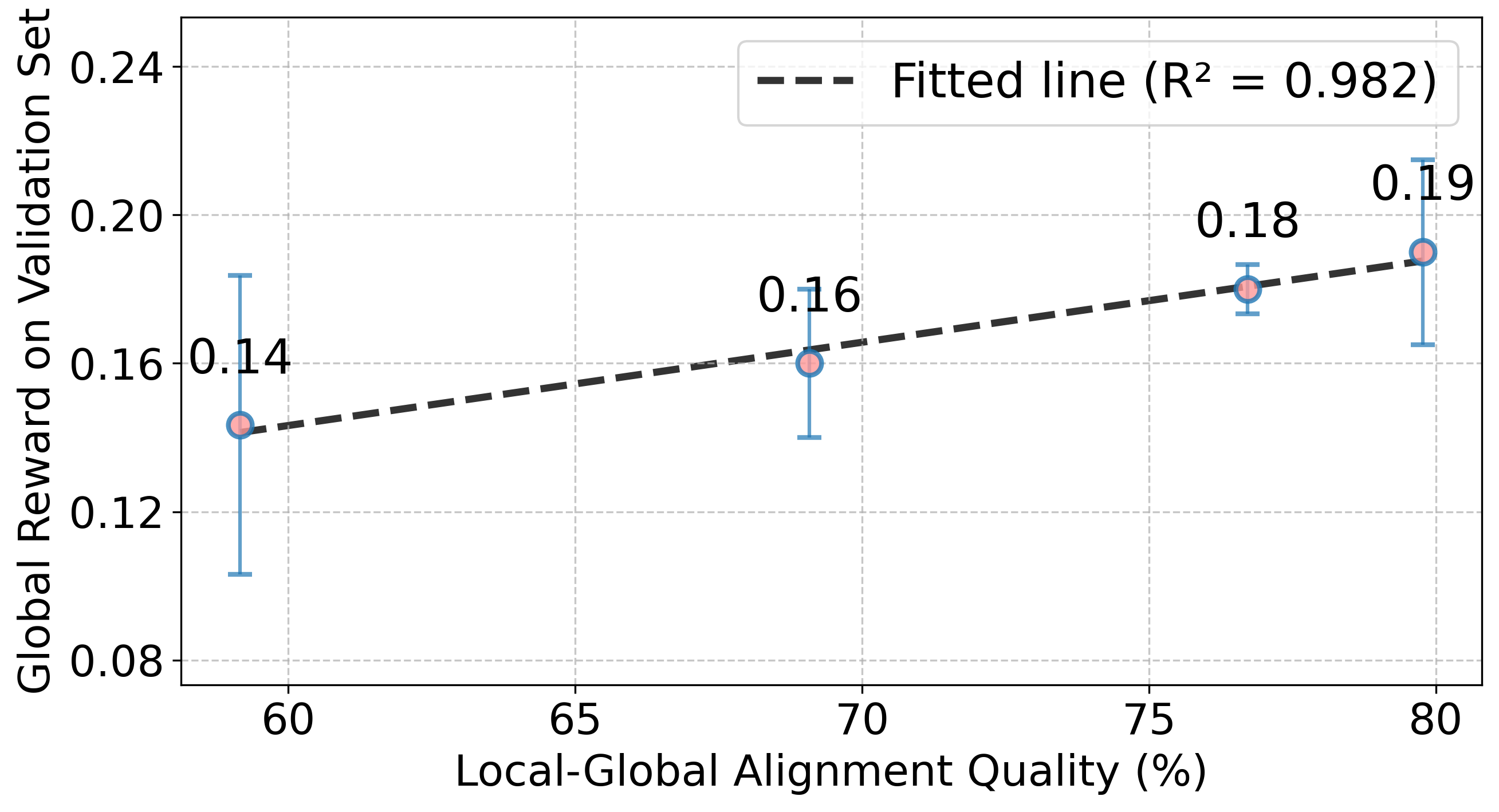}
    \vspace{-15pt}
    \caption{\texttt{\small Candidate Profiler} on \amazon{}}
    \label{fig:alignment_amazon}
  \end{subfigure}
  \hfill
  \begin{subfigure}[t]{0.45\textwidth}
    \centering
    \includegraphics[width=\linewidth]{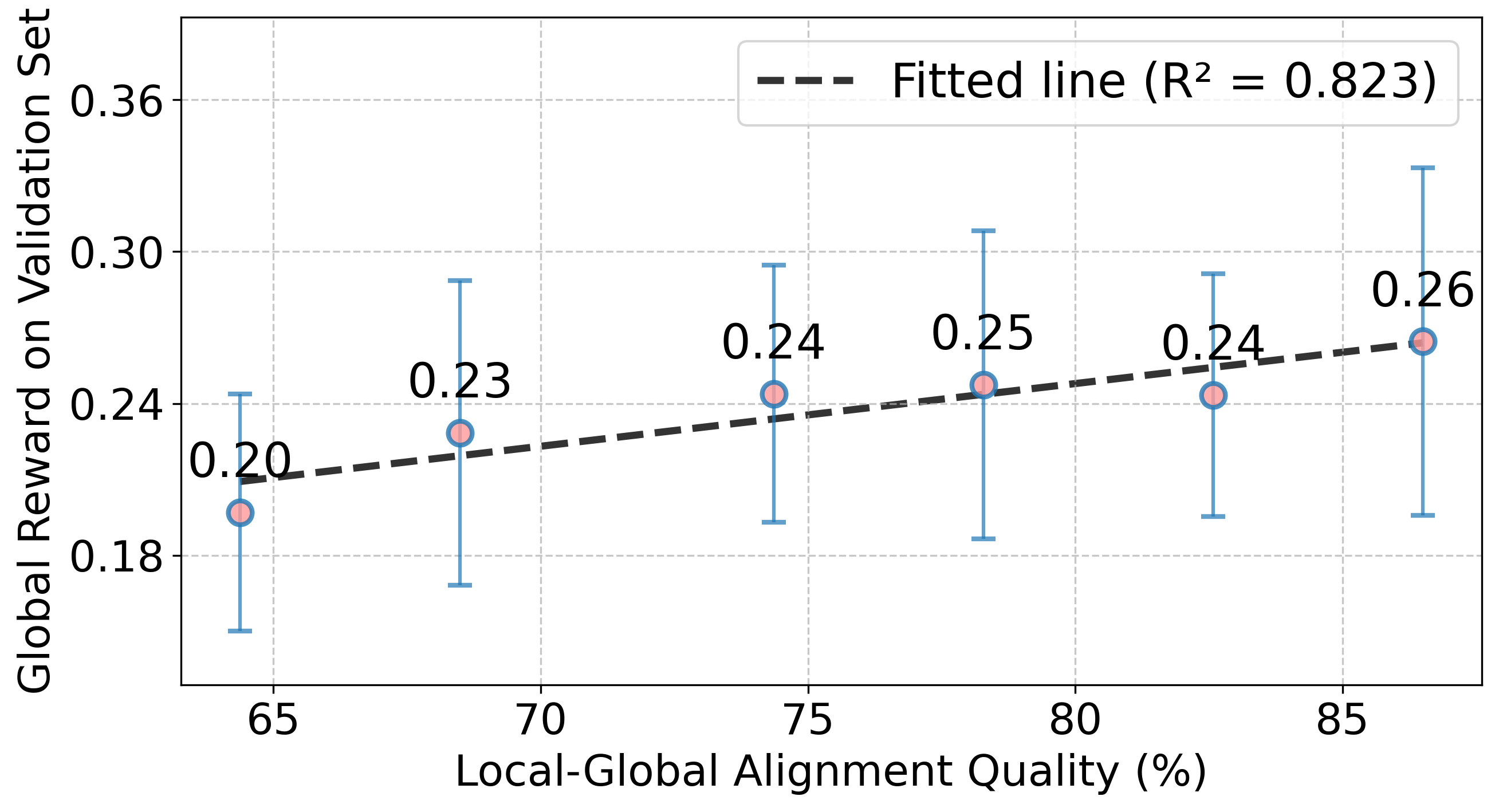}
    \vspace{-15pt}
    \caption{\texttt{\small Answer Generator} on \hotpot{}}
    \label{fig:alignment_hotpot}
  \end{subfigure}
  \hfill
    \vspace{-8pt}
  \caption{%
    Local reward models with varying alignment quality are used to optimize a selected \obj{} in each task, where we observe that higher alignment quality yields higher \glob{}s.  
    }
  \label{fig:alignment}
\end{figure}
\begin{figure}[t]
    \centering
    \includegraphics[width=1\textwidth]{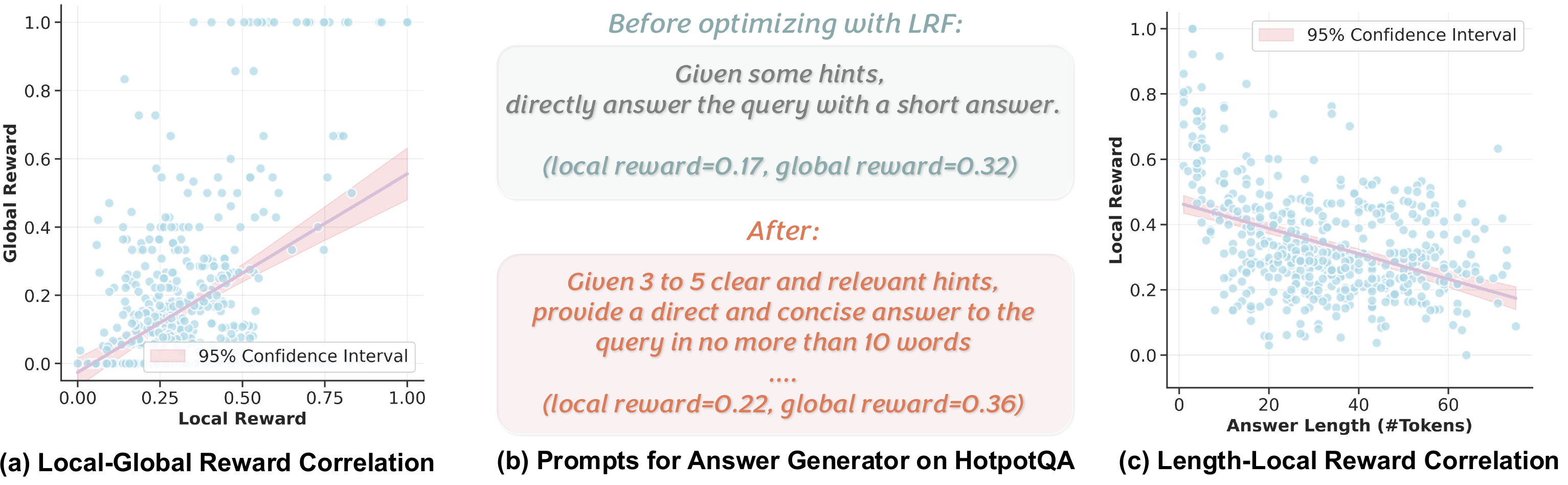}
    \vspace{-15pt}
    \caption{\iclr{An interpretability study on what is learned by LRFs. With (a) a well-aligned \abbr{}, we find that (b) the optimized prompt explicitly constrains the output length of the component. We attribute this to that (c) the \abbr{} prefers short outputs, which is consistent with the use of F1 as global metric.}}
    \label{fig:case_study_single_step}
    \vspace{-8pt}
\end{figure}

\iclr{We conduct extensive in-depth study to understand the mechanism in \optimas{} framework.}

\xhdr{\iclr{Takeaway 3: \optimas{} yields high local-global alignment quality (Table~\ref{tab:pairwise_rank})}}
\iclr{
To measure alignment quality of \abbr{}s, we compute pairwise ranking accuracy: the probability that an output with higher \glob{} receives a higher score than an output with lower \glob{}. This reflects how well the learned \abbr{}s aligns with \glob{}s.
We compare against a LLM Judge, which prompts a \texttt{gpt-4o} model to score the outputs of components based on 20 in-context examples. This approach is similar to prior methods such as TextGrad, which rely on few-shot reasoning over textual patterns.
In Table~\ref{tab:pairwise_rank}, LLM Judge performs closer to random guessing, due to the diversity and stochasticity of components' outputs that make it difficult to reason reliably.
In contrast, our \abbr{}s achieve substantially higher performances.
Moreover, \abbr{}s internalize the local-global alignment within their weights without relying on limited in-context examples, enabling more precise alignment. 
}


      




\iclr{
\xhdr{Takeaway 4: Higher alignment quality usually leads to higher \glob{} (Figure~\ref{fig:alignment})}
To understand how alignment quality affects \glob{}, we conduct controlled experiments, where we select a key \obj{}, apply \loc{} models with varying alignment quality to optimize it, and measure the average \glob{} achieved after updating the \obj{}. 
}
\iclr{
In Figure~\ref{fig:alignment}, we show a strong positive correlation between the \abbr{}'s alignment quality and the \glob{} improvement. 
}

\vspace{-5pt}
\xhdr{\iclr{Takeaway 5: \abbr{}s learn interpretable directions to improve \glob{}  (Figure~\ref{fig:case_study_single_step})}} 
\iclr{An important aspect of system optimization is interpretability, \ie if the configuration updates are reliable and understandable. We provide a study in Figure~\ref{fig:case_study_single_step}, where the \abbr{} learns to favor concise answers. In fact, it is more feasible to interpret \config{} updates with \abbr{}s. Specifically, one can perturb the component outputs in certain ways, and observe the changes in \loc{}s to obtain insights.
}

\begin{figure}[t]
    \centering
    \begin{minipage}[c]{0.55\textwidth}
        \centering
        \includegraphics[width=0.98\linewidth]{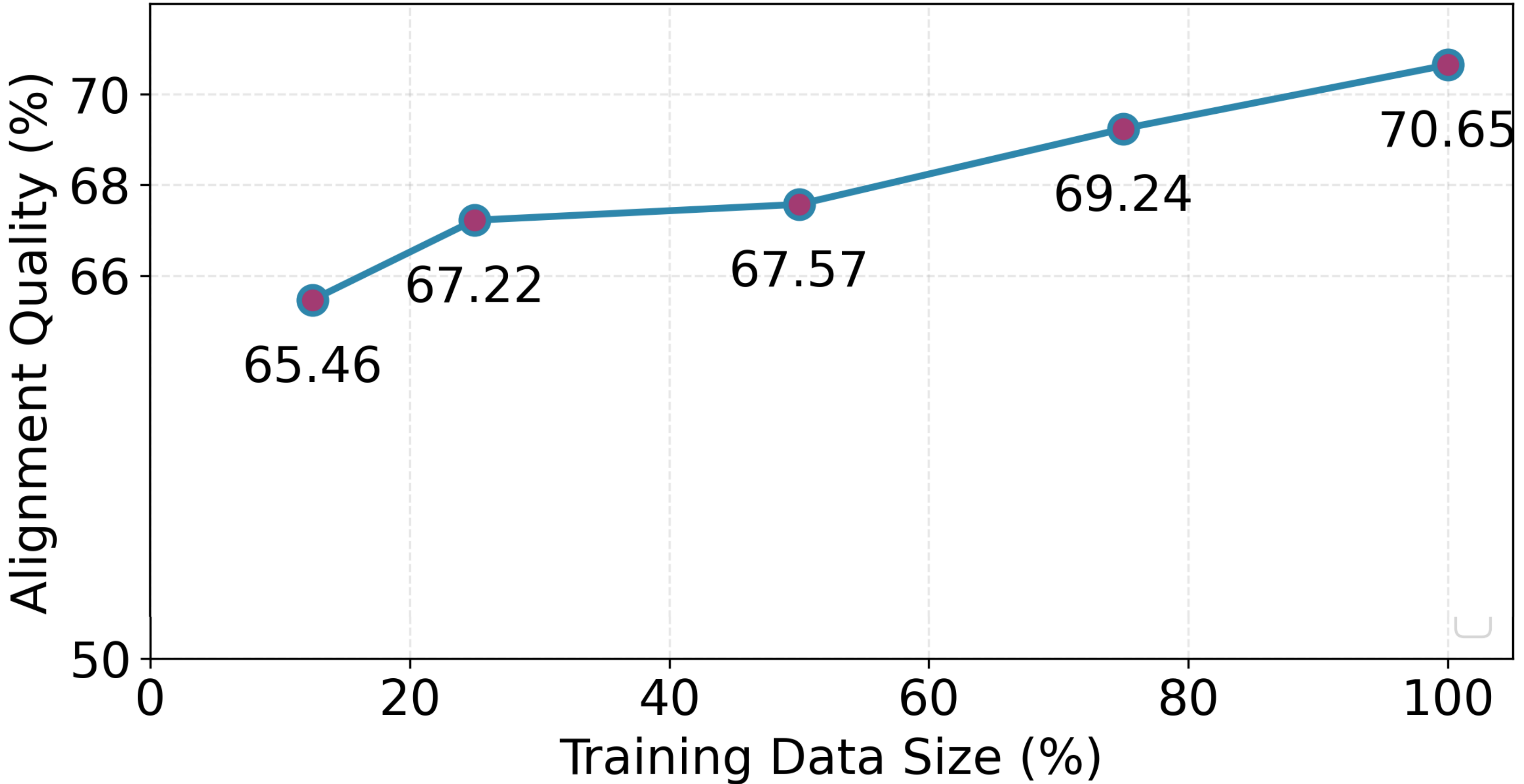}
    \end{minipage}
    \hfill
    \begin{minipage}[c]{0.44\textwidth}
        \centering
        \vspace{-10pt}
        \resizebox{0.75\textwidth}{!}{
        \begin{tabular}{c|c}
            \toprule
            Backbone  & Alignment  \\
             size & quality (\%) \\
            \midrule
            1B & 72.58 \\
            3B & 70.04 \\
            8B & 71.20 \\
            \bottomrule
        \end{tabular}
        }
    \end{minipage}
    \captionof{figure}{Impact of training data size and \abbr{} backbone size on alignment quality, measured by pairwise ranking accuracy: the percentage of preference pairs where local and global rewards agree.}
    \label{fig:efficiency}
    \vspace{-15pt}
\end{figure}

\xhdr{\iclr{Takeaway 6: \optimas{} is data- and computationally efficient. (Figure~\ref{fig:efficiency} \& Appendix~\ref{app:more_results})}} 
We study the efficiency of \optimas{} along two axes on the \hotpot{} system: (1)~how much training data is needed to learn effective \abbr{}s, and (2)~whether large \abbr{} backbones are necessary.

We train \abbr{} models using varying percentages of available training data (12.5\%--100\%) and measure alignment quality via pairwise ranking accuracy. Figure~\ref{fig:efficiency} (left) shows the performance degrades gracefully: using just 12.5\% of the data yields 65.46\% accuracy (92.7\% of full-data performance), and 25\% recovers 95.1\%. This indicates that \optimas{} is \textbf{data efficient}, where \abbr{} can be learned from relatively few system runs.
Moreover, Figure~\ref{fig:efficiency} (right) compares 1B, 3B, and 8B \abbr{} backbones and reports alignment quality. The results show that \optimas{} is \textbf{computationally efficient}, where lightweight models are sufficient for learning local-global alignment.

In Appendix~\ref{app:more_results}, we further show that local and global reward landscapes are closely aligned across retriever top-$k$ settings, that modest numbers of prompt candidates and adaptation inputs suffice for strong performance, and provide a cost comparison on the \amazon{} system showing that \optimas{} achieves the best performance while using a comparable number of effective system runs.

\section{Conclusion}
\iclr{\optimas{} is a unified framework to optimize compound AI systems with heterogeneous configurations. \optimas{}' way to maintain globally aligned local reward functions allows every component, whether a fine-tunable LLM, LLM API, tools, or model selector, to be optimized locally while improving the overall system. 
On five real-world tasks, \optimas{} outperforms strong baselines, effectively optimizes components with different configurations, and exhibits high alignment quality and reliable interpretations. We believe \optimas{} will serve as a general, data-efficient approach for continually optimizing practical systems. For future work, we aim to further apply \optimas{} on even larger systems, with the goal of understanding complex reward modeling and scalability. }

\section*{Acknowledgements}
\vspace{-5pt}
We gratefully acknowledge the support of Amazon, 
NSF under Nos. CCF-1918940 (Expeditions), DMS-2327709 (IHBEM), IIS-2403318 (III);
NIH under No. 1U24NS146314-01,
Stanford Data Applications Initiative,
Wu Tsai Neurosciences Institute,
Stanford Institute for Human-Centered AI,
Chan Zuckerberg Initiative,
Genentech, SAP, and SCBX.

\section*{Ethics Statement}
\vspace{-5pt}
The applications of compound AI systems today include, but are not limited to, decision processes in socially and economically important domains, making their optimization an increasingly significant research problem. While our work aims to make optimization of such systems more data-efficient, robust, and interpretable, we recognize potential ethical concerns. In high-stakes settings such as healthcare or recommendation platforms, optimization could inadvertently amplify biases or propagate unsafe behaviors from individual components; our framework mitigates these risks by aligning local rewards with global performance, thereby reducing the likelihood of harmful emergent behaviors and creating opportunities to incorporate fairness- and safety-sensitive objectives. Another concern is malicious use of optimized compound systems; however, our approach relies on safety-aligned LLMs and tools, and we observe that aligning local and global objectives does not reduce the effectiveness of existing safeguards, while the decentralized optimization structure may even increase opportunities to detect misuse. Regarding data, our experiments rely solely on publicly available datasets or synthetic user interactions without any personally identifiable information (PII), and no private or sensitive user data were used. We believe this work contributes to the safe and responsible advancement of AI by providing both theoretical and practical insights into optimizing compound AI systems, and to promote further research in this direction we release all code, models, and benchmarks described in this paper.

\section*{Reproducibility statement} 
\vspace{-5pt}
We provide full open access to our implementation code, including the original datasets, compound system implementations, and the \optimas{} framework.
To support reproducibility, we provide Appendix~\ref{app:dataset} (Dataset Details), Appendix~\ref{app:system} (System Details), Appendix~\ref{app:baselines} (Baseline Details), and finally, Appendix~\ref{app:opt_details} (\optimas{} Details). 

\bibliography{iclr2026_conference}
\bibliographystyle{iclr2026_conference}

\appendix

\newpage
\section{Algorithm}
\label{app:algorithm}

\begin{algorithm}[H]
\caption{Component-wise optimization with reward-model adaptation}
\begin{algorithmic}[1]
  \Statex \textbf{Input}
  \Statex $\mathcal{C}=\{C_1,\ldots,C_K\}$ \Comment{components}
  \Statex $\mathbf v^{0}=(\mathbf v^{0}_1,\ldots,\mathbf v^{0}_K)$ \Comment{initial configuration policy}
  \Statex $\Theta^{0}=\{\theta^{0}_{1},\ldots,\theta^{0}_{K}\}$ \Comment{parameters of local reward functions}
  \Statex Global reward $R(\cdot)$, preference dataset size $k$, total iterations $T$
  \Statex Training dataset $\mathcal{D}$ and validation dataset $D_v$

  \State $\mathbf v^{*} \gets \mathbf v^{0}$ 
  \State $\Theta^{t} \gets \Theta^{0}$

  \Statex \textbf{Optimization loop}
  \For{$t = 0,\ldots, T-1$}

    \Statex \textbf{Scheduler: choose a component to optimize}
    \State $i_t \sim \mathrm{Uniform}\big(\{1,\ldots,K\}\big)$ 

    \Statex \textbf{Local optimization for the chosen component $C_{i_t}$}
    \If{$C_{i_t}$ is an LLM with prompts}
      \State $\tilde{\mathbf{v}}^{\,t+1}_{i_t} \gets \textsc{PromptOptimization}(\mathbf{v}^{t}_{i_t}, \theta^{t+1}_{i_t})$
    \ElsIf{$C_{i_t}$ has trainable weights}
      \State $\tilde{\mathbf{v}}^{\,t+1}_{i_t} \gets \textsc{PPOTrain}(\mathbf{v}^{t}_{i_t}, \theta^{t+1}_{i_t})$
    \ElsIf{$C_{i_t}$ has a hyperparameter configuration}
      \State $\tilde{\mathbf{v}}^{\,t+1}_{i_t} \gets \textsc{HyperparameterSearch}(\mathbf{v}^{t}_{i_t}, \theta^{t+1}_{i_t})$
    \EndIf

    \State $\tilde{\mathbf v} \gets \mathbf v^{t}$ with $\mathbf{v}^{t}_{i_t}$ replaced by $\tilde{\mathbf{v}}^{\,t+1}_{i_t}$

    \Statex \textbf{Validation }
    \If{$\sum_{x_v \in D_v} R\!\big(x_v, f(x_v; \tilde{\mathbf v})\big) \;>\; \sum_{x_v \in D_v} R\!\big(x_v, f(x_v; \mathbf v^{t})\big)$}
      \State $\mathbf v^{t+1} \gets \tilde{\mathbf v}$; \quad $\mathbf v^{*} \gets \mathbf v^{t+1}$
          
        \State \textbf{Reward-model adaptation}
        \State $D_t \gets \textsc{CollectPreferenceData}(\mathcal{D}, \Theta^{t}, k)$ \Comment{create $k$ $(x^{+},x^{-})$ pairs }
        \State $\Theta^{t+1} \gets \textsc{RewardModelTrain}(\Theta^{t}, D_t)$
    \Else
      \State $\mathbf v^{t+1} \gets \mathbf v^{t}$\State $\mathbf \Theta^{t+1} \gets \mathbf \Theta^{t}$
    \EndIf

  \EndFor

  \State \Return $\mathbf v^{*}$
\end{algorithmic}
\end{algorithm}

\section{Theoretical Analysis}
\label{app:theory}
\paragraph{Formal statement of and proof of Theorem~\ref{thm:alignment}}

According to the procedure described in Section~\ref{sec:method}, the positive and negative pairs are determined by comparing the expected task metrics. We assume the estimated metrics at $y_k^{+}$ and $y_k^{-}$ are chosen following
$\mathbb P(\text{$y_k^+$ is labeled as positive})=\sigma_{\alpha}(\mathbb{E}_{\text{downstream}}
\Bigl[ R\bigl(x, f(x;\mathbf v_{-k}(x))\bigr) \,\big|\, y_k^{+}\Bigr]-\mathbb{E}_{\text{downstream}}
\Bigl[ R\bigl(x, f(x;\mathbf v_{-k}(x))\bigr) \,\big|\, y_k^{-}\Bigr])$, where $\sigma_\alpha(u)=\frac{1}{1+\exp(-\alpha u)}$ is the sigmoid function with parameter $\alpha>0$. $\alpha=+\infty$ corresponds to the case where the pairs are chosen deterministically. 

\begin{thm}
Under the conditions specified above, the maximizer of Eq. ~\ref{eq:reward_model} satisfies the local-global alignment property Eq. ~\ref{eq:alignment}. In addition, maximizing $r_k(x, C_k(x_k;\mathbf{v}_k))$ over $v_k$ and maximizing $R(x,f(x;\mathbf v_{-k})\mid C_k(x_k;\mathbf{v}_k))$ over $v_k$ will yield the same solution. 
\end{thm}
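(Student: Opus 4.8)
The plan is to recognize the labeling rule as a Bradley--Terry preference model whose latent score is a positive affine function of the expected downstream reward, and then invoke the identifiability of that model. Write $g_k(x_k, y_k) := \mathbb{E}_{\text{downstream}}\bigl[ R(x, f(x; \mathbf{v}_{-k})) \mid y_k\bigr]$ for the expected \glob{} obtained by fixing the output $y_k$ of $C_k$ and averaging over downstream stochasticity. By the stated labeling process, for a fixed input $x_k$ and two candidates $y^{(1)}, y^{(2)}$, the probability that $y^{(1)}$ is labeled positive is $\sigma_\alpha\bigl(g_k(x_k, y^{(1)}) - g_k(x_k, y^{(2)})\bigr)$, which is precisely a Bradley--Terry model with latent utility $\alpha\, g_k(x_k, \cdot)$.

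First I would compute the population minimizer of \eqref{eq:reward_model} pointwise in each pair. Fixing $x_k$ and $(y^{(1)}, y^{(2)})$, and abbreviating $p := \sigma_\alpha\bigl(g_k(x_k, y^{(1)}) - g_k(x_k, y^{(2)})\bigr)$ and $\Delta := r_k(x_k, y^{(1)}) - r_k(x_k, y^{(2)})$, the per-pair contribution is the scalar cross-entropy $-p\log\sigma(\Delta) - (1-p)\log\sigma(-\Delta)$. Differentiating and using $\tfrac{d}{d\Delta}\log\sigma(\Delta) = 1 - \sigma(\Delta)$ gives the stationarity condition $\sigma(\Delta) = p$; since the second derivative is $\sigma(\Delta)(1-\sigma(\Delta)) > 0$, this is the unique minimizer. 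As $\sigma$ is a strictly increasing bijection and $p = \sigma\bigl(\alpha(g_k(x_k, y^{(1)}) - g_k(x_k, y^{(2)}))\bigr)$, the optimal difference is $\Delta = \alpha\bigl(g_k(x_k, y^{(1)}) - g_k(x_k, y^{(2)})\bigr)$ for every pair.

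Next I would assemble these pairwise identities into a global statement about $r_k$. The constraint $r_k(x_k, y^{(1)}) - r_k(x_k, y^{(2)}) = \alpha\bigl(g_k(x_k, y^{(1)}) - g_k(x_k, y^{(2)})\bigr)$ holding across the support forces $r_k(x_k, y_k) = \alpha\, g_k(x_k, y_k) + c(x_k)$, where $c(x_k)$ is independent of $y_k$. This is exactly where the regularity conditions enter: I need realizability (the reward-model class can represent this affine form) and sufficient support of the preference distribution over output pairs so that the differences pin $r_k(x_k, \cdot)$ down up to the single additive constant $c(x_k)$. Since $\alpha > 0$, we immediately get $r_k(x_k, y^+) \ge r_k(x_k, y^-) \iff g_k(x_k, y^+) \ge g_k(x_k, y^-)$, which is the alignment property \eqref{eq:alignment}.

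Finally, for the second claim I would reuse the affine identity. Because the local optimization over $\mathbf{v}_k$ holds all upstream components fixed, the distribution of $x_k$ does not depend on $\mathbf{v}_k$, so $\mathbb{E}_{x_k}[c(x_k)]$ is constant in the objective; hence $\mathbb{E}_{x_k}\bigl[r_k(x_k, C_k(x_k; \mathbf{v}_k))\bigr] = \alpha\, \mathbb{E}_{x_k}\bigl[g_k(x_k, C_k(x_k; \mathbf{v}_k))\bigr] + \text{const}$, and by the tower property the right-hand expectation is the conditional \glob{} $R\bigl(x, f(x; \mathbf{v}_{-k}) \mid C_k(x_k; \mathbf{v}_k)\bigr)$. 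A positive-slope affine transform preserves $\arg\max$, so the two maximizations over $\mathbf{v}_k$ coincide. I expect the main obstacle to be the identifiability step, namely carefully stating the support and realizability conditions under which the pointwise-optimal differences genuinely assemble into a single consistent reward function; the first-order stationarity computation itself is routine.
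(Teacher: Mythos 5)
Your proposal is correct, and its first half coincides with the paper's own argument: both identify the population minimizer of the pairwise log-sigmoid loss by a first-order stationarity computation, concluding that the learned reward reproduces the expected downstream return up to a positive scaling $\alpha$ (the paper packages this as a logistic-regression identifiability lemma; your per-pair cross-entropy calculation is the same computation). Where you genuinely diverge is the second claim. The paper proves it via a separate monotonicity lemma that uses only the alignment property: it assumes the expected local reward under $\tilde{\mathbf v}$ exceeds that under $\mathbf v$ and then asserts a pointwise inequality $r_k(x_k,\tilde y_k)\ge r_k(x_k,y_k)$ for almost every $x$ --- a step that does not actually follow from an inequality in expectation, and which in any case only yields a one-directional ``local improvement cannot hurt'' conclusion rather than coincidence of maximizers. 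Your route --- substituting the affine identity $r_k = \alpha\, g_k + c(x_k)$ into the local objective, noting that $c(x_k)$ integrates to a constant because upstream components are fixed, and invoking invariance of $\arg\max$ under positive affine maps --- is more direct, delivers the full two-sided equivalence that the theorem actually claims, and sidesteps the pointwise-versus-expectation issue entirely. You are also right that the identifiability step (sufficient support of the pair distribution and realizability of the affine form) is where the unstated ``regularity conditions'' must live; the paper leaves this implicit, and making it explicit, including the additive offset $c(x_k)$, is a genuine improvement over the paper's presentation.
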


\begin{proof}
We first present a lemma.
\begin{lem}
    Suppose $(\bm x,y)\in\mathbb{R}^p \times \{-1,1\}$ follows the distribution $\Prob(y=1\mid \bm x)=\sigma_1(p^*(x))$ for some function $p:\bR^p\to(0,1)$. Then $$
\arg\rebuttal{\text{max}}_{p} \E[\log(\sigma_1(y\cdot p(x)))]=p^*.
    $$
\end{lem}
\begin{proof}
    We take the derivative of the left-hand side with respect to $p$ and set it to $0$: $$\E[\frac{\sigma_1(y\cdot p(x))\cdot \sigma_1(-y\cdot p(x))}{\sigma_1(y\cdot p(x))}\cdot y]=0,$$
    which is equivalent to $$
\E[\sigma_1(-y\cdot p(x))\cdot y]=0.
    $$
    We then have $$
    \sigma_1(-p(x)) \sigma_1(p^*(x))- \sigma_1(p(x))\sigma_1(-p^*(x))=0,
    $$
    and therefore $p(x)=p^*(x)$.
\end{proof}

Applying this lemma, we can obtain the solution to Eq. ~\ref{eq:reward_model} is $\alpha\cdot \mathbb{E}_{\text{downstream}}
\Bigl[ R\bigl(x, f(x;\mathbf v_{-k}(x))\bigr) \,\big|\, y_k\Bigr]$ for some positive $\alpha$, and therefore it satisfies the local-global alignment property Eq. ~\ref{eq:alignment}.

In the following, we then prove that maximizing $r_k(x, \pi_k(x))$ over $v_k$ and maximizing $R(x,f(x;\mathbf v_{-k})\mid \pi_k(x))$ over $v_k$ will yield the same solution. 
\begin{lem}Assume the local--global alignment property Eq. ~\ref{eq:alignment} holds for
component \(C_k\).  
Let \(\mathbf v(x)\) and \(\tilde{\mathbf v}(x)\) be two configuration
policies that differ \emph{only} in the policy for component \(C_k\);
denote the corresponding local outputs by \(y_k\) and \(\tilde y_k\),
respectively.  
If
\[
\mathbb{E}_x\!\bigl[r_k\bigl(x_k,\tilde y_k\bigr)\bigr]
\;>\;
\mathbb{E}_x\!\bigl[r_k\bigl(x_k, y_k\bigr)\bigr],
\]
then
\[
\mathbb{E}_x\!\Bigl[R\bigl(x,f(x;\tilde{\mathbf v}(x))\bigr)\Bigr]
\;\;\ge\;\;
\mathbb{E}_x\!\Bigl[R\bigl(x,f(x;\mathbf v(x))\bigr)\Bigr].
\]
\end{lem}
\begin{proof}
Fix an arbitrary input instance \(x\).
Because the two policies differ only at \(C_k\),
all other component configurations remain the same,  
so we can write
\[
f\!\bigl(x;\mathbf v_{-k}(x), y_k\bigr)
\quad\text{and}\quad
f\!\bigl(x;\mathbf v_{-k}(x),\tilde y_k\bigr),
\]
where \(\mathbf v_{-k}(x)\) denotes downstream configurations
(independent of the choice at \(C_k\)).  
By assumption on the expected local reward, we have
\(r_k(x_k,\tilde y_k)\ge r_k(x_k, y_k)\) for almost every \(x\).
Applying the alignment property Eq. ~\ref{eq:alignment} pointwise yields
\[
\mathbb{E}_{\text{downstream}}
\bigl[
  R\bigl(x,f(x;\mathbf v_{-k}(x),\tilde y_k)\bigr)
  \,\big|\, x_k
\bigr]
\;\;\ge\;\;
\mathbb{E}_{\text{downstream}}
\bigl[
  R\bigl(x,f(x;\mathbf v_{-k}(x), y_k)\bigr)
  \,\big|\, x_k
\bigr].
\]
Taking the expectation over \(x\) (law of total expectation) gives
\[
\mathbb{E}_x\!\Bigl[
  R\bigl(x,f(x;\tilde{\mathbf v}(x))\bigr)
\Bigr]
\;\;\ge\;\;
\mathbb{E}_x\!\Bigl[
  R\bigl(x,f(x;\mathbf v(x))\bigr)
\Bigr].
\]
Hence increasing the expected local reward for \(C_k\) cannot
decrease---and may strictly increase---the expected global objective.
\(\hfill\Box\)
\end{proof}
\end{proof}

\paragraph{Proof of Theorem~\ref{thm:convergence}} We first show that the algorithm is essentially performing coordinate maximization on $l(\mathbf v)$. In fact, given a previous configuration \(\mathbf{v}^t\), at time $t$, the updated configuration \(\mathbf{v}^{t+1}\) only changes the configuration of a single component, say, $C_k$. As the change is solved by maximizing $r_k(x, C_k(x_k;\mathbf{v}_k))$, by Theorem~\ref{thm:alignment}, this is equivalently maximizing $l(\mathbf v_k, \mathbf v_{-k})$ for the $k$-th coordinate.

To rule out cycling, we first prove that the discrete block $\mathbf v^{(2)}$ stabilizes. Consider the sequence
 $(\mathbf v^{1,(1)},\mathbf v^{1,(2)}), (\mathbf v^{2,(1)},\mathbf v^{2,(2)}),...,(\mathbf v^{t,(1)},\mathbf v^{t,(2)}),...$. We first show that there exists $T>0$, such that for all $k>0$, $\mathbf v^{T+k,(2)}=\mathbf v^{T, (2)}$. 

As $\mathbf v^{(2)}$ are discrete, there are finitely many different configurations. In addition, according to the assumption that the coordinate-wise maximum is unique, each update will result in a strict decrease in the loss function. Therefore, after finite number of iterations, $\mathbf v^{(2)}$ will not change.

Now when we consider all the iterations that are later than the time $T$, $\mathbf v^{(2)}$ is fixed, and we only need to consider the update regarding $\mathbf v^{(1)}$. In this case, we apply Theorem 4.1 of \citep{tseng2001convergence} and complete the proof. 


\section{Dataset Details}
\label{app:dataset}
This appendix provides the essential statistics and source information for the five compound‑system benchmarks used in our experiments (Figure~\ref{fig:tasks}).  
For each dataset we specify the train / validation / test split sizes, the task formulation as used in the compound pipeline, and the evaluation metric reported in the main paper

\paragraph{\amazon{} (Behavior‑Driven Next‑Item Recommendation).}
The corpus is derived from Amazon MMLU dataset~\citep{amazon}.  
Each instance consists of a user’s historical behaviour sequence (views, clicks, purchases) and the target “next item’’ to be recommended. We split it into to \texttt{335} / \texttt{60} / \texttt{99} user sequences after filtering malformed entries. Accuracy - whether the predicted item number 1 matches the ground truth - is the evaluation metric.

\paragraph{\pubmed{} (Medical Analysis based QA).}
\pubmed{}~\citep{pubmedqa} contains biomedical abstracts paired with yes/no/maybe answers to research questions. We keep the original “expert’’ split and discard ambiguous samples, resulting in \texttt{475} / \texttt{25} / \texttt{500} question–abstract pairs. Our compound system frames the task as three‑way classification; exact‑match accuracy is reported. 

\paragraph{\stark{} (Semi-Structured Knowledge Base Retrieval).}
\stark{}‑Prime originates from STARK benchmark introduced by~\citep{stark}.  It blends free‑text passages with relational triples from biomedical knowledge graphs.  
Queries are natural‑language questions; relevance labels are automatically propagated from the original STARK annotations.  
We uses the original dataset split: \texttt{495} / \texttt{51} / \texttt{96} queries. Performance is measured by Hit@1, which is the rate of ranking the ground truth items in the predicted ranking list.  

\paragraph{\hotpot{} (Retrieval-Augmented Multi-Hop QA).}
We adopt the \hotpot{}~\citep{hotpotqa} and keep the official train/dev/test splits: \texttt{1000}, \texttt{250}, and \texttt{100} questions respectively.  Each example in the  set contains  a question and its (human-annotated) answer. We report answer‑level F1 score.

\paragraph{\code{} (Self-Verified Code Generation).}
We use a subset of the \textit{full-instruction} subset of BigCodeBench~\citep{bigcodebench} due to efficiency issue.
After proportionally drop the data, we obtain \texttt{500} / \texttt{25} / \texttt{70} coding tasks.  Each sample includes a natural‑language specification and reference unit tests.  
Our metric is \textit{pass@1}: the proportion of generated programs that pass all tests in one try.

\section{Compound AI System Details}
\label{app:system}

Table~\ref{tab:modules-optimization} summarizes each pipeline’s modules (columns: \emph{System}, \emph{Module}, \emph{Model}, \emph{Config}, and \emph{Optimization}). In the table below, we clarify the various configuration spaces and optimization methods used across the five systems.

\begin{table}[h!]
\centering
\caption{Modules, models, and optimization methods. \textit{Model Selection (LLMs)\protect\footnotemark[1]} indicates a discrete choice of LLMs. 
\textit{Aggregator}\protect\footnotemark[2] is tuned over coefficients \(\texttt{relation\_weight}\) and \(\texttt{text\_weight}\). 
\textit{Retriever}\protect\footnotemark[3] has a hyperparameter \(\texttt{k}\).}
\label{tab:modules-optimization}
\resizebox{\textwidth}{!}{
\begin{tabular}{llccc}
\toprule
\textbf{System} & \textbf{Module} & \textbf{Model} & \textbf{Config} & \textbf{Optimization} \\
\midrule
\multirow{3}{*}{\textbf{Amazon}} 
 & Session Analyzer    & Qwen~2.5\,1.5B  & Model Params & PPO (RL) \\
 & Candidate Profiler  & Qwen~2.5\,1.5B  & Model Params & PPO (RL) \\
 & Next Item Decider   & GPT-4o-mini   & Prompt       & Prompt Opt.\\
\midrule
\multirow{4}{*}{\textbf{PubMed}} 
 & Context Model Selector & --                             & Model Selection (LLMs)\footnotemark[1] & Hyperparam Search \\
 & Context Analyst        & One of \{\texttt{gpt-4o}, \ldots\}\footnotemark[1] & Prompt & Prompt Opt. \\
 & Solver Model Selector  & --                             & Model Selection (LLMs)\footnotemark[1] & Hyperparam Search \\
 & Problem Solver         & One of \{\texttt{gpt-4o}, \ldots\}\footnotemark[1] & Prompt & Prompt Opt.\\
\midrule
\multirow{3}{*}{\textbf{STaRK}} 
 & Text Scorer     & Claude 3 Haiku & Prompt & Prompt Opt. \\
 & Relation Scorer & Claude 3 Haiku & Prompt & Prompt Opt. \\
 & Aggregator\footnotemark[2]
                   & --             & Coefficients & Hyperparam Search \\
\midrule
\multirow{5}{*}{\textbf{HotpotQA}} 
 & Question Rewriter & GPT-4o-mini   & Prompt & Prompt Opt. \\
 & Info Extractor    & GPT-4o-mini   & Prompt & Prompt Opt. \\
 & Retriever\footnotemark[3]
                    & --            & \#Retrieved passages & Hyperparam Search \\
 & Hint Generator    & GPT-4o-mini   & Prompt & Prompt Opt. \\
 & Answer Generator  & GPT-4o-mini   & Prompt & Prompt Opt. \\
\midrule
\multirow{3}{*}{\textbf{BigCodeBench}} 
 & Code Generator       & Claude 3 Haiku & Prompt & Prompt Opt. \\
 & Unit Test Generator  & Claude 3 Haiku & Prompt & Prompt Opt. \\
 & Final Code Generator & Claude 3 Haiku & Prompt & Prompt Opt. \\
\bottomrule
\end{tabular}
}
\end{table}

\footnotetext[1]{\textbf{Model Selection (LLMs)}: we search over 
\{\texttt{gpt-4o}, \texttt{gpt-4o-mini}, \texttt{gpt-3.5-turbo-0125}, \texttt{gpt-4-turbo}, \texttt{claude-3-5-haiku-20241022}, \texttt{claude-3-5-sonnet-20241022}, \texttt{claude-3-7-sonnet-20250219}\}.}
\footnotetext[2]{\textbf{Aggregator (STaRK)}: we search \(\texttt{relation\_weight}, \texttt{text\_weight}\in\{0.1,1.0\}\).}
\footnotetext[3]{\textbf{Retriever (HotpotQA)}: we search \(\texttt{k}\in\{1,5,10,25\}\).}

\paragraph{\amazon{} (Behavior-Driven Next-item Recommendation).} \textit{Session Analyzer} and \textit{Candidate Profiler} both use the Qwen~2.5\,1.5B model; we optimize their model parameters with PPO reinforcement learning \citep{ppo}. This helps each module better encode task-specific knowledge, \ie user sessions and product candidates. The final \textit{Next Item Decider} is a GPT-4o-mini module, whose \textit{prompt} we optimize.

\paragraph{\pubmed{} (Medical Analysis based QA).} Two modules (\textit{Context Model Selector} and \textit{Solver Model Selector}) each do discrete model selection from a \emph{list of possible LLMs}. At inference time, these selectors use a reward model to pick the best LLM for each input instance. The \textit{Context Analyst} and \textit{Problem Solver} modules then receive the chosen model and optimize \emph{only the prompt} for improved medical QA performance.

\paragraph{\stark{} (Semi-Structured KB Retrieval).} We have two scoring modules (\textit{Text Scorer}, \textit{Relation Scorer}), both using Claude~3~Haiku with \emph{prompt optimization}. The \textit{Aggregator} merges these two scores; we tune two numeric weights, \(\texttt{relation\_weight}\) and \(\texttt{text\_weight}\), each set in \(\{0.1,1.0\}\). We perform a global hyperparameter search across the entire training set for the \textit{Aggregator} module, tuning 
\(\texttt{relation\_weight}\)and \(\texttt{text\_weight}\). After identifying the best fixed combination, we use it in for inference.

\paragraph{\hotpot{} (Retrieval-Augmented Multi-Hop QA).} Four GPT-4o-mini modules (\textit{Question Rewriter}, \textit{Info Extractor}, \textit{Hint Generator}, \textit{Answer Generator}) each rely on \emph{prompt optimization} to improve multi-hop reasoning. Meanwhile, the \textit{Retriever} is a retriever with a key hyperparameter \(\texttt{k}\), the number of passages to pull. We search \(\texttt{k}\in\{1,5,10,25\}\) which we also tune via global hyperparameter search across  training instances.

\paragraph{\code{} (Self-Verified Code Generation).} All three modules (\textit{Code Generator}, \textit{Unit Test Generator}, \textit{Final Code Generator}) are Claude~3~Haiku LLMs; each uses \emph{prompt optimization} to iteratively refine code solutions based on test outcomes. The global objective is a higher pass rate on the final code.

Overall, the table illustrates how \textbf{different modules can require different types of optimization}---from prompt tuning (textual modifications) and model-parameter fine-tuning (PPO) to discrete model selection and hyperparameter search. By unifying these heterogeneous updates within our \optimas{} framework, we effectively coordinate local improvements to achieve consistent global reward gains.

All experiments were run on a node with 8 NVIDIA A100 GPUs (80\,GB memory each); depending on the complexity of the compound system and hyperparameters, training and optimization typically finished in 2--8 hours.

\begin{table}[h]
\centering
\vspace{-10pt}
\caption{Key hyperparameters for the local reward model training.}
\label{tab:rm-training-hparams}
\label{tab:lora-hparams}
\begin{tabular}{l l}
\toprule
\textbf{Parameter} & \textbf{Value} \\
\midrule
Base model & Llama 3 8B Instruct \\
LoRA rank  & 32 \\
LoRA alpha  & 16 \\
Maximum sequence length & 2048 tokens \\
Learning rate  & 2e-6 \\
Number of epochs  & 25 \\
batch size & 32 \\
\bottomrule
\end{tabular}
\end{table}

Before on-policy optimization, we train a reward model on preference pairs (``chosen'' vs.\ ``rejected'' system outputs) so it can assign higher scores to better outputs. 
Table~\ref{tab:rm-training-hparams} highlights the main hyperparameters for this stage. 
We adopt LoRA for memory efficiency, and use an early-stopping mechanism based on the evaluation loss (patience=512 steps) to reduce overfitting.

After training the reward model, we run iterative on-policy optimization for each module in the compound AI system. 
Table~\ref{tab:onpolicy-hparams} lists the key hyperparameters. For example, the \emph{train size} (50) limits how many examples are used to train each module’s local configuration, while the \emph{search size} (50) sets how many samples we use when searching for the best local update. 
When a module is selected, we collect a small preference data, retrain (or adapt) the reward model as needed, then locally optimize that module’s parameters or prompts to maximize its local reward.

\begin{table}[h!]
\centering
\caption{Key hyperparameters for on-policy optimization.}
\label{tab:onpolicy-hparams}
\begin{tabular}{ll}
\toprule
\textbf{Parameter} & \textbf{Value / Description}\\
\midrule
Train size                & 50 \\
Search size              & 50 \\
Prompt candidates        & 3  \\
Local optimization steps & 3  \\
Fresh input size         & 20 \\
Validation size          & 20 \\
\bottomrule
\end{tabular}
\end{table}

\section{Baseline Details}
\label{app:baselines}

We provide the details to reproduce the reported baseline results and the results of \optimas{}.

\begin{itemize}[leftmargin=*, itemsep=2pt]
    \vspace{-8pt}
    \item \textbf{Unoptimized}: This system uses default settings for all components without any optimization.
    \vspace{-2pt}
    \item \textbf{LLMSelector}~\citep{model_selection}: A lightweight policy selects the best LLM per component via model routing, without updating other configurations. Only applicable on \pubmed{}. 
    We run LLMSelector  \citep{model_selection} with the \textsc{LLMDiagnoser} to estimate per-module performance, following their procedure. We perform two rounds of allocation updates with 100 training examples each round. 
    \vspace{-2pt}
    \item \rebuttal{\textbf{REINFORCE}~\citep{Williams92}: A policy-gradient baseline that directly updates the parameters of local LLM components to maximize the task reward. This method is only applicable on \amazon{}, where we deploy two locally hosted LLMs with trainable parameters. We optimize each component with REINFORCE using sampled trajectories from the full system and propagate the scalar reward back to the corresponding module-level policy. We use 16 rollouts per step to estimate the expected system performance, with learning rate $10^{-5}$ for 100 training steps. }

    \item \textbf{Hierarchical Behavior Cloning (HBC)}~\citep{hbc}: A hierarchical imitation learning method that optimizes \obj{}s to produce outputs similar to those that lead to high \glob{}s. \rebuttal{Specifically, we collect successful trajectories to approximate ground truth intermediate outputs, and then perform supervised updates on the local components to mimic/clone the ideal behavior.}

    We run HBC using the collected preference dataset by replacing the original reward model with the embedding similarity score. With the same input in the preference dataset, we use \texttt{text-embedding-3-small} to embed the module output and the preferred output in the preference dataset and calculate the embedding similarity score. We further weight the similarity score using the gap of the preferred output score and the rejected output score to get the reward for HBC.

    \vspace{-2pt}
    \item \textbf{TextGrad}~\citep{textgrad}: A gradient-based prompt tuning method using estimated gradients from black-box LLMs to improve prompt efficacy.

    We run TextGrad using \texttt{GPT-4o mini} to optimize each \obj{}'s prompt independently in separate epochs. Validation is performed every two optimization steps using 20 held-out validation instances. A batch size of 4 is used across all components and datasets. The best-performing prompt, as determined by validation frequency of LLM-generated textual feedback, is selected as the final configuration.
    \vspace{-2pt}
    \item \textbf{DSPy}~\citep{dsp,mipro}: A prompt optimization framework using the MIPRO algorithm that jointly refines module-level instructions and few-shot demonstrations. We conduct optimization on DSPy's MIPRO~\citep{mipro} prompt optimization approach. For fair comparison, we disable the few-shot example and system prompt optimization and only conducts optimization on the user instructions. We dynamically set the number of iterations for the  MIPRO optimizer to match the budget of system runs in Table~\ref{tab:data_efficiency}.
\end{itemize}


For TextGrad, DSPy, and \optimas{}, we consistently using the same 20 held-out validation instances on each dataset to select the best \config{}s.

\section{\optimas{} Details}
\label{app:opt_details}

\xhdr{\Obj{} selection}  
At each iteration \(t\), we randomly select a \obj{} to optimize. 



\xhdr{Local Optimization Steps for Different Configurations} 
Given a globally aligned \abbr{} \(r_k\), we perform local optimization on each \obj{} \(C_k\) to improve its \config{} \(\mathbf{v}_k\). 
Specifically, we solve:
\begin{equation}
\label{eq:config}
\mathbf{v}_k^{t+1}
=
\arg\max_{\mathbf{v}_k \in \mathcal{V}_k}
\;
\mathbb{E}_{x_k} \bigl[
    r_k\bigl(x_k,\,C_k(x_k;\mathbf{v}_k)\bigr)
\bigr]
\quad \text{subject to} \quad
d\bigl(\mathbf{v}_k, \mathbf{v}_k^{t}\bigr)\; \le\; \delta,
\end{equation}
where \(\mathbf{v}_k^{t}\) is the configuration before the $t$-th iteration, \(d(\cdot, \cdot)\) is a distance function over configurations, and \(\delta\) defines a trust region that bounds allowable updates. This constraint ensures that \(r_k\) is used within a region where it is expected to produce reliable evaluations.

In practice, explicitly setting the trust region threshold \(\delta\) can be difficult due to heterogeneous configuration types (\eg continuous weights or discrete tokens). Instead, we adopt a conservative number of update steps to restrict the magnitude of change during each iteration. 

\vspace{-5pt}
\begin{itemize}[leftmargin=*]
    \item \textbf{Prompt tuning.}  
    For textual prompts, we apply prompt optimization algorithms~\citep{llm_as_opt, avatar}, using \(r_k\) as the evaluation metric. We sample multiple prompts limited by a max number of prompt candidates. The prompts are ranked by average reward over validation instances, and the best-performing prompt is selected.

    \item \textbf{Model fine-tuning.}  
    When \(C_k\) is an LLM or neural model with trainable parameters, we can apply reinforcement learning algorithms, such as Proximal Policy Optimization (PPO)~\citep{ppo}, using \(r_k\) as the critic. The model parameters are updated for a small and fixed number of steps.

    \item \textbf{Model selection and hyperparameter tuning.}  
    For discrete or low-dimensional continuous configurations, such as model selection, tool routing, or scalar hyperparameters, we formulate the optimization as a sampling problem parameterized by a probabilistic distribution. Since these configurations are instance-specific, the expectation in Eq.~Eq. ~\ref{eq:config} reduces to a single input. For each input \(x\), we evaluate a set of candidate configurations using the \abbr{} \(r_k\), and compute a probability distribution over candidates proportional to \(\exp\{r_k(x_k, C_k(x_k; \mathbf{v}_k))\}\). This distribution is then used to sample the configuration update for the current iteration.
\end{itemize}

Under a conservative update to the \config{} of a \obj{} \(C_k\), the expected \glob{} is guaranteed to maintain or improve, if the local–global alignment property in Eq.~Eq. ~\ref{eq:alignment} holds.

\vspace{-5pt}

\section{More Experiment Results}
\label{app:more_results}

\subsection{Local vs.\ Global reward landscapes}

\begin{table}[h]
\centering
\caption{On \hotpot{}, sweeping the retriever's top-$k$ reveals closely aligned local and global reward landscapes.}
\label{tab:hotpot_retriever_k}
\begin{tabular}{c|ccccccc}
\toprule
$k$ & 1 & 2 & 3 & 5 & 10 & 15 & 25 \\
\midrule
Local reward  & 0.4247 & 0.5578 & 0.5695 & 0.6124 & 0.6117 & 0.5949 & 0.5123 \\
Global reward & 0.3398 & 0.3493 & 0.3325 & 0.3598 & 0.3645 & 0.3568 & 0.3465 \\
\bottomrule
\end{tabular}
\end{table}
To better understand how local objectives reflect global performance, we sweep the retriever’s top-$k$ setting on \hotpot{} and compare local and global rewards (Table~\ref{tab:hotpot_retriever_k}). The two landscapes are closely aligned: both are unimodal and peak at nearby values ($k{=}5$ for the local reward and $k{=}10$ for the global reward). The top-3 configurations $\{5,10,15\}$ coincide, differing only in the order of the top-2. This alignment shows that the local reward provides a reliable proxy for the global objective, lending empirical support to our theoretical guarantee that local optimization drives system-level gains.




\subsection{Prompt candidates per step}

\begin{table}[h]
\centering
\caption{On \hotpot{}, we vary the number of candidate prompts per optimization step. We compute the global performance (F1) under each experiments.}
\label{tab:prompt_candidates_hotpot}
\begin{tabular}{c|cccc}
\toprule
\# candidates & 3 & 5 & 7 & 10 \\
\midrule
Final F1 & 0.2822 & 0.1945 & 0.2968 & 0.2405 \\
\bottomrule
\end{tabular}
\end{table}

Another factor is the number of candidate prompts considered at each optimization step. On \hotpot{}, we vary this number from 3 to 10 and measure the resulting global performance (Table~\ref{tab:prompt_candidates_hotpot}). The best F1 score is achieved with 7 candidates, although results with fewer candidates remain competitive. These findings indicate that exhaustive candidate pools are unnecessary, and that modest numbers already yield strong performance with lower computational cost.

\subsection{New inputs for \abbr{} adaptation}

\begin{table}[h]
\centering
\caption{On \hotpot{}, number of new inputs used to collect preference pairs for \abbr{} adaptation vs.\ final global performance (F1).}
\label{tab:new_inputs_hotpot}
\begin{tabular}{c|cccc}
\toprule
\# new inputs & 10 & 20 & 30 & 40 \\
\midrule
Final F1 & 0.2773 & 0.2822 & 0.2659 & 0.2533 \\
\bottomrule
\end{tabular}
\end{table}
Finally, we explore how many new inputs are needed when adapting the \abbr{}s. On \hotpot{}, we test adaptation with 10 to 40 new inputs (Table~\ref{tab:new_inputs_hotpot}). Performance improves up to about 20 inputs, after which gains plateau and even slightly decline. This suggests that effective adaptation can be achieved with relatively small amounts of new data, reinforcing the practicality of our approach in scenarios where data collection is limited.


\subsection{\rebuttal{Cost Estimation}}

\label{app:cost_estimation}

We report a
detailed breakdown of the cost on the \amazon{} system. This is the only
system where components have trainable local models and thus requires PPO
training. For all methods, we measure
cost in terms of \emph{full system runs}, \ie one invocation of the system and assume that all components contribute equally to the per-run cost. We label the three components on Amazon system as $A, B, C$ in topological order.
\begin{table}[h]
\centering
\caption{Cost--performance comparison on the \amazon{} system. Individual cost components are shown in actual runs, with total runs measured in thousands (k). ``LRF training'' and ``LRF adaptation'' only apply to \optimas{}, which learns trainable local reward functions. DSPy and TextGrad optimize only the prompt of the last component $C$, and we report their cost as an \emph{effective} number of full system runs (see text).}
\label{tab:amazon_cost}
\begin{tabular}{l|ccccc}
\toprule
Method & LRF training & LRF adaptation & Validation cost & Total runs (k) & Performance (\%) \\
\midrule
\optimas{}        & 120 & 100 & 93  & 0.31 & 24.24 \\
TextGrad       & \textemdash{} & \textemdash{} & 320 & 0.32 & 20.88 \\
DSPy           & \textemdash{} & \textemdash{} & 240 & 0.24 & 18.18 \\
\bottomrule
\end{tabular}
\end{table}

For \optimas{}, the total of $\approx 0.31$k effective full system runs decomposes
into three parts:
(i) \textbf{LRF training}: we collect 60 initial preference pairs per
component; each pair requires 2 full system runs, giving $60 \times 2 = 120$
runs;
(ii) \textbf{LRF adaptation}: during local optimization, we update the LRFs 5
times, each time collecting 10 new preference pairs per component, again with 2
runs per pair, for a total of $5 \times 10 \times 2 = 100$ runs; and
(iii) \textbf{global validation}: whenever a local update is predicted to
improve the global metric, we evaluate the new configuration on a held-out
validation set. We use 20 validation inputs and observe 7 such updates, but the
\emph{effective} cost is discounted by a factor of $2/3$ due to cached
trajectories:
\[
20 \times 7 \times \tfrac{2}{3} \approx 93.
\]
The discount factor arises because, for each validation, we do not always need
to downstream sample the entire system: roughly speaking, if $A$ is updated, we need 1 system run per validation input; if $B$ is updated, we need $2/3$ system run per validation input; if $C$ is updated, we need $1/3$ system run per validation input
\[
\frac{1 + \tfrac{2}{3} + \tfrac{1}{3}}{3} = \tfrac{2}{3}
\]
of a full system run. The resulting global performance is $24.24\%$.

For DSPy and TextGrad, we follow their standard setup and optimize only the
prompt for the final component $C$, while holding $A$ and $B$ fixed. To obtain
a fair cost comparison in terms of effective full system runs, we (i)
pre-compute a pool of 20 validation inputs by running $A$ and $B$ once, and
then (ii) run only $C$ during optimization. Assuming $A$, $B$, and $C$ have
comparable cost, we convert the number of $C$-only calls into an effective
number of full system runs by dividing by three. With 48 optimization steps for
TextGrad and 36 for DSPy, this yields
\[
20 \times 48 / 3 = 320 \quad \text{and} \quad 20 \times 36 / 3 = 240
\]
effective full system runs, corresponding to 0.32k and 0.24k in
Table~\ref{tab:amazon_cost}. These are conservative lower bounds, since any
additional evaluation of upstream components would only increase their cost.

Under this cost-normalized view, \optimas{} achieves the best global performance
while using a comparable (slightly lower) effective number of full system runs
than TextGrad and only moderately more than DSPy. This demonstrates that local
optimization with learned reward functions can deliver stronger performance
without incurring prohibitive cost.

\paragraph{Additional PPO training cost.}
Beyond system-run cost, \optimas{} incurs additional compute for training local
models with PPO. We measure this cost in GPU-hours. For each local PPO update
on the \amazon{} system, we train for 3 epochs on a single NVIDIA
A100-SXM4-80GB GPU. Averaged over 5 runs of local optimization, each run takes
approximately 12 minutes, yielding a total of about 6 GPU-hours (equivalently,
1.5 hours on 4 GPUs). This one-time training overhead is modest relative to the
cost of repeated system evaluations and is only required for systems with
trainable local components.

\end{document}